\theoremstyle{plain}
\newtheorem{theorem}{Theorem}[section]
\newtheorem{proposition}[theorem]{Proposition}
\newtheorem{lemma}[theorem]{Lemma}
\theoremstyle{definition}
\newtheorem{definition}[theorem]{Definition}
\theoremstyle{remark}
\title{From Pixels to Factors:\\Learning Independently Controllable State Variables for Reinforcement Learning}
\author{%
  Rafael Rodriguez-Sanchez \\
  Brown University\\
  \texttt{rrs@brown.edu} \\
  \And
  Cameron Allen \\
  UC Berkeley \\
  \texttt{camallen@berkeley.edu} \\
  \And
  George Konidaris \\
  Brown University \\
  \texttt{gdk@cs.brown.edu} \\
}
\begin{document}

\maketitle

\begin{abstract}
Algorithms that exploit \emph{factored} Markov decision processes are far more sample‑efficient than factor‑agnostic methods, yet they assume a factored representation is known \emph{a priori}---a requirement that breaks down when the agent sees only high‑dimensional observations.
Conversely, deep reinforcement learning handles such inputs but cannot benefit from factored structure.
We address this representation problem with Action‑Controllable Factorization (ACF), a contrastive learning approach that uncovers \emph{independently controllable} latent variables---state components each action can influence separately.
ACF leverages sparsity: actions typically affect only a subset of variables, while the rest evolve under the environment's dynamics, yielding informative data for contrastive training.
ACF recovers the ground‑truth controllable factors directly from pixel observations on three benchmarks with known factored structure---\textsc{Taxi}, \textsc{FourRooms}, and \textsc{MiniGrid‑DoorKey}---consistently outperforming baseline disentanglement algorithms.
\end{abstract}

\section{Introduction}

Over the last decade, deep reinforcement learning (RL) has enabled agents to learn complex behaviors directly from high-dimensional observations---e.g., pixels in Atari games \citep{mnih2015human}, and continuous control from pixels \citep{lillicrap2015continuous, levine2016visuomotor}--without manual feature engineering. However, this flexibility comes at a cost: modern deep RL methods remain strikingly sample-inefficient.

Classical work in factored RL shows that, if the underlying Markov decision process (MDP) can be decomposed into state-variable factors with sparse dependencies, one can achieve exponential gains in both model learning and planning \citep{boutilier1995exploiting, guestrin2003efficient}. Indeed, factored variants of PAC-RL algorithms such as factored $E^3$ \citep{kearns1999efficient} and Factored RMax \citep{guestrin2002algorithm, brafman2002rmax}, provably exploit these structures for faster convergence, and subsequent methods even learn the dependency graph online \citep{strehl2007structured,diuk2009adaptive}. More recently, factored representations have proven useful for world modeling \citep{wang2022causal,pitis2020counterfactual,pitis2022mocoda}, exploration \citep{wang2023elden,seitzer2021causal}, and skill discovery \citep{vigorito2010skills, wang2024skild, chuck2024granger, chuck2025null}. Crucially, all these gains depend on having access to a hand-specified factored representation. 

State-of-the-art model-based deep RL approaches avoid simulating trajectories in raw observations by learning latent world models end-to-end \citep{hafner2019learning, schrittwieser2020mastering, hansen2022temporal, rodriguez-sanchez2024learning}. Some efforts attempt to induce factorization in these latent spaces \citep{hansen2022temporal,hafner2025mastering}, 
but they do not offer empirical or theoretical guarantee that any factor is identified.
In parallel, unsupervised and self-supervised learning has long studied disentanglement \citep{bengio2013representation, locatello2019challenging} as a way to achieve better generalization.  Although there is no consensus formalization of disentanglement, two classical approaches are nonlinear ICA (Independent Component Analysis; \cite{comon1994independent,hyvarinen2023nonlinear}) and causal representation learning \citep{scholkopf2021toward} yet these methods do not fully ground to decision making and control.
For instance, some methods pretrain disentangled representations for RL \citep{higgins2017betavae} based on the assumption that the learned variables will be useful for downstream decision-making. 
Meanwhile, causal representation learning \citep{scholkopf2021toward} leverages the notion of interventions, related to an RL agent actions, but do not directly address the sequential decision making problem. 

We address this representation gap for factored RL by explicitly targeting the recovery of independently controllable variables \citep{thomas2018disentangling}.
Our key idea is to use a contrastive objective that compares the predicted next-state distributions under agent actions against those under the environment’s natural dynamics.
Thus, we align our latent factors with the underlying state variables that are controlled independently.
We validate our approach on pixel-based versions of Taxi \citep{dietterich2000hierarchical}, MiniGrid-DoorKey \citep{MinigridMiniworld23, pignatelli2024navix}, and FourRooms \citep{sutton1999between}, showing that we can automatically recover controllable state variables that align with an expert-designed representation, directly from the pixels.

\textbf{Contributions} First, we formalize the representation learning problem for factored RL as the problem of identifying \emph{independently controllable} latent variables. Moreover, we propose a novel contrastive learning objective that leverages action-induced discrepancies in next-state predictions to isolate controllable factors. Finally, we demonstrate empirically that our method recovers ground-truth controllable factors directly from pixels in classical RL domains.

\section{Background}

\textbf{Markov Decision Process} We consider an agent that acts in a Markov Decision Process (MDP;  \cite{puterman1994mdps})  $\mathcal{M} = \langle \mathcal{S}, A, T, R, p_0, \gamma \rangle$ with a continuous state space $\mathcal{S} \subseteq \mathbb{R}^{d_s}$ and a discrete action set $A$. The transition function $T: \mathcal{S}\times A \rightarrow \Delta(\mathcal{S})$\footnote{$\Delta(X)$ is the set of probability densities over set $X$.} models the world's dynamics,  $R: \mathcal{S} \times A \rightarrow \mathbb{R}$ is a reward function, $\gamma \in [0,1)$ is the discount factor, and $p_0 \in \Delta(\mathcal{S})$ is the initial state distribution.

\textbf{Factored MDPs} (FMDPs; \cite{boutilier1996approximate}) are a particular class of MDPs that have a factorized transition function:
\begin{equation*}
T(s' \mid s, a) = \prod_{i=1}^K T_i(s_i'\mid \text{pa}(s'_i), a),
\end{equation*}
where the state space is $\mathcal{S} = \mathcal{S}_1 \times \dots \times \mathcal{S}_K$ and each $S_i \subseteq \mathbb{R}$. 
Moreover, $\text{pa}(s_i') : \mathcal{S}_i \rightarrow \mathcal{P}([K])$, where $\mathcal{P}([K])$ is the power set of $[1,2,\dots, K]$, and it represents the set of factors required to predict $s_i'$.
Typically, this structure is represented by a Dynamic Bayesian Network (DBN; \cite{boutilier1995exploiting, boutilier2000stochastic}). 

\textbf{Nonlinear Independent Component Analysis} (ICA; \cite{comon1994independent}) is the problem of identifying a set of independent signal sources from entangled measurements. Formally, given a set of generating sources $\{s_i \in \mathcal{S}_i\}_{i=1}^K$ that are independent and distributed according to densities $p(s_i)$, ICA identifies the set of generating signals from observed measurements $x$ that are entangled (mixed) by an unknown function $o$---i.e., $x = o(s_1, \dots, s_K)$. In the case of non-linear ICA, $o$ is a nonlinear, invertible function. 
In particular, we will consider the problem of non-linear ICA with auxiliary variables \citep{hyvarinen2019nonlinear}, where the sources to be identified are \textit{conditionally} independent given an auxiliary variable $u$. 
Moreover, we assume that our agent receives high-dimensional observations that are generated by an observation function that is a diffeomorphism\footnote{A bijection that is continuously differentiable and whose inverse is also continuously differentiable.} $o : \mathcal{S} \rightarrow X \subseteq \mathbb{R}^{d_x}$, where $d_x \gg d_s$.

\section{ACF: Action Controllable Factorization}

Imagine a simple desk lamp with two separate switches: one toggles the lamp’s power, flipping it on or off, while the other cycles the bulb’s color between warm and cold light. If you leave both switches untouched, the lamp may still occasionally flicker on or change color on its own, but with a significantly lower probability. By observing the lamp when you flip only the power switch versus doing nothing, you isolate the ``on/off'' factor; likewise, by pressing only the color switch versus leaving it alone, you isolate the ``color'' factor. Because each switch only affects one property while the other property evolves naturally, you can disentangle these two characteristics simply by contrasting action-driven changes with natural behavior. The lamp might have other characteristics like volume, weight, and shape; however, these are not factors that can be controlled by the agent.
Here we focus on disentangling factors that \textit{are} controllable.

\subsection{Problem Formulation}

\textbf{Setting} We consider that the agent does not have access to the ground truth factored state space $S$. Instead, it gets high-dimensional observations that are generated by an unknown diffeomorphism $o: \mathcal{S} \rightarrow X\subseteq \mathbb{R}^{d_x}$.
Hence, we are concerned with learning from the observed samples of $T(x'\mid x, a)$ an encoder $f_\phi: X \rightarrow Z$, where $Z$ factorizes as $Z = Z_1 \times \dots \times Z_K$, that \textit{identifies} the underlying factors.

\textbf{Identification } Formally, we say that a learned factorization identifies the underlying factor $\mathcal{S}_i$ if and only if there exist invertible functions $h_i$ and permutation function $\rho$ such that $h_i: Z_i \rightarrow S_{\rho(i)}$ for all $i \in \{1, \dots, K\}$. That is, we can recover the underlying factors up to permutation and invertible transformations.

In many problems, the agent's actions have sparse effects on the environment: just a few factors are controlled, while others just follow their natural transition, unaffected by the agent. To help the agent understand its environment, we assume that the agent has a \textit{special action $a_0$} that corresponds to a \textit{no-op} (or observe) action that allows the agent to observe the natural evolution of the environment without intervening. 

\textbf{Transition Dynamics }Let $\Psi: \mathcal{S}\times A \rightarrow \mathcal{P}([1,2,\ldots, K])$ be the set of variables affected by action $a$ in state $s$. We assume the transition dynamics factorize as follows,
\begin{align}\label{eq:transition}
    T(s'\mid s,a) = \prod_{i \in \Psi(s,a)} T(s'_i\mid s, a) \prod_{j\not\in \Psi(s,a)}T(s'_j\mid s, a_0);
\end{align}
where $T(s_i'\mid s, a_0)$ represents the natural (or observational) dynamics. In here, we will consider conditioning the transition dynamics on the full current state $s$, instead of just the parents, given that $T(s_i'\mid s,a) = T(s_i'\mid \text{pa}(s_i'), a)$.

Moreover, for the unknown observation function $o$, a diffeomorphism, we know that the \textit{observed} dynamics follow \citep{boothby2003introduction}:
\begin{equation}\label{eq:obs_dynamics}
    T(x'\mid x, a) = |\det\left(J_{o^{-1}}(x')^TJ_{o^{-1}}(x')\right)| ^{1/2} T(s'\mid s,a),
\end{equation}
This equation relates the observed dynamics $T(x'\mid x, a)$ to the underlying ground truth state dynamics $T(s'\mid s, a)$ by the Jacobian matrix $J_{o^{-1}}$, whose determinant quantifies the change in volume between the two spaces. This relation can be seen as the generalization to higher dimensions of the change of variable formula in probability theory. 


\subsection{Algorithm}

\textbf{Energy Parameterization} We parameterize the encoder by  $f_\phi(x) \mapsto z$, with parameters $\phi$, and, more importantly, we parameterize the transition function as the sum of energy functions (unnormalized probability densities) such that,
    $T(z'\mid z,a) \propto \exp\left(\sum_{i=1}^K E_\theta(z_i', a, z)\right)$
with $i\in [K]$ and parameters $\theta$. This sum of energies reflects the factorized structure where each energy represent the transition dynamics of latent variable $z_i$.

\textbf{Learning a Markov Representation} In order to estimate these energy functions from data and learn a Markov representation suitable for RL \citep{allen2021learning}, we optimize the following training objectives.
%
%
Firstly, we estimate the inverse dynamics $I^\pi$ using our energy functions, as follows,
\begin{align}
    I^\pi(a\mid z,z') &= \frac{T(z'\mid z,a)\pi(a\mid z)}{\sum_{a'} T(z'\mid z,a
')\pi(a'\mid z)}
    \propto \frac{\exp\left(\sum_i E_\theta(z_i', a,z)\right)\pi(a \mid z)}{\sum_{a'\in A}\exp\left(\sum_i E_\theta(z_i', a',z)\right)\pi(a' \mid z)};
\end{align}
and because our action set is discrete, we can use a softmax multiclass classifier to learn our inverse function by minimizing the cross entropy loss:
\begin{equation}
    \mathcal{L}_{\text{inv}}(\phi, \theta) =-\log I^\pi(a \mid z, z').
\end{equation}

Secondly, we use InfoNCE \citep{oord2018representation} to maximize the mutual information between $z$ and $z'$: we use a batch $B$ of $N-1$ negative samples and $1$ positive sample, and minimize the following loss,
\begin{equation}
    \mathcal{L}_{\text{fwd}}(\phi, \theta) = -\log \frac{\exp\left(\sum_i E_\theta(z_i', a, z)\right)}{\sum_{z^j \in B} \exp\left(\sum_i  E_\theta(z_i^j, a, z)\right)}.
\end{equation}

Optimizing these losses guarantee that we learn a Markov representation that preserves the relevant information for action effects prediction \citep{allen2021learning} without requiring an explicit reconstruction loss. However, they do not ensure that the representation will align with the controllable factors. 

To see this, consider an invertible mapping $g : S \rightarrow Z$ between the ground truth state $s$ and another representation $z$. The relation between the densities is given by the following change of variable formula: 
$T(s'\mid s,a ) = |\det J_g(s')| T(z'\mid z,a)$. 
Therefore, if $|\det J_g(s')| = 1$ (e.g., $g$ is a rotation), the distribution will match even in the case we use a factorized prior (for an extended discussion, see \cite{locatello2019challenging, hyvarinen2023nonlinear})


\textbf{Factorizing the Controllable Variables}
We formalize our intuition and exploit the sparsity of the actions' effects to learn a latent representation $Z$ that identifies the controllable factors.

The core idea is to contrast the effect of an action, the distribution $T(x'\mid x, a)$, against the natural dynamics $T(x'\mid x, a_0)$, where $a_0$ is the no-op action, using the following ratio:

\begin{align*}
    \log r_a(x', x) &= \log \frac{T(x'\mid x, a)}{T(x'\mid x, a_0)} 
    = \log \frac{|\det (J_{o^{-1}}(x')^TJ_{o^{-1}}(x'))|^{1/2} \prod_i T(s_i'\mid s,a)}{|\det (J_{o^{-1}}(x')^TJ_{o^{-1}}(x'))|^{1/2} \prod_i T(s_i'\mid s, a_0)};\\
    &= \log \frac{T(s_j'\mid s, a)}{T(s_j'\mid s, a_0)}
    = \log r_a(s', s),
\end{align*}

where $s_j'$ is the factor affected by $a$ when executed in $s$. Therefore, this ratio is a function of the factor $s_j'$ and not the rest.

In practice, we can estimate these ratios from observed transitions contrastively  \citep{gutmann2010nce, hyvarinen2019nonlinear}. We leverage our energy parameterization to infer a binary classifier that differentiate between transitions from action $a$ from another, e.g. the null action $a_0$.

These classifiers can be computed from the energies using a sigmoid function $\sigma$:
\begin{align*}
    \sigma(\log r_a(z', z)) := \sigma(\log r_a(f_\phi(x'), f_\phi(x))) = \sigma\left(\sum_i E_\theta(z'_i, a, z) - E_\theta(z'_i, a_0, z)\right).
\end{align*}

Finally, we train our energy functions to match the observed ratios by training $|A|-1$ classifiers computed by $\sigma(\log r_a(z',z))$.
We use the transitions of other actions as negative samples and minimize the following binary cross-entropy loss:

\begin{align}
    \mathcal{L}_r(\theta, \phi) &= \sum_{a' \in A} [a'= a]\log\sigma\left(\log r_a+\zeta_a\right)+[a'\neq a]\log\left(1-\sigma\left(\log r_a + \zeta_a\right)\right);
\end{align}

where $[\cdot]$ is indicator functions that is $1$ when the condition holds, and $\zeta_a := \log \frac{\pi(a \mid z)}{\pi(a_0\mid z)}$ are correction weights to account for the policy used to collect the data. In practice, we estimate the policy from the dataset and use the estimate to compute the loss.
Finally, we minimize a weighted sum of these losses and use AdamW as our optimizer \citep{loshchilov2018decoupled}. Algorithm \ref{alg:acf} formalizes the approach.

\begin{algorithm}[t]
\caption{Action Controllable Factorization}
\label{alg:acf}
\begin{algorithmic}[1]
\REQUIRE Dataset $\mathcal{D}=\{(x,a,x')\}$, encoder $f_\phi$, set per-factor energy models $\{E_\theta^{k}\}_{k=1}^K$, policy $\pi_w$, Learning rate $\alpha$, weights $\beta_r,\beta_{\mathrm{fwd}},\beta_{\mathrm{inv}},\beta_{\pi}$
\FOR{minibatch $\{(x^n,a^n,x'{}^n)\}_{n=1}^N\sim\mathcal{D}$}
  \STATE Encode: $z^n\!\leftarrow\!f_\phi(x^n),\;z'{}^n\!\leftarrow\!f_\phi(x'{}^n)$
  \STATE Noise: $z^n\!\leftarrow\!z^n+\varepsilon^n,\;  z'{}^n\!\leftarrow\!z'{}^n+\varepsilon'{}^n$
  \STATE Negatives: $\mathcal{N}=\{(z^i,a^j,z'{}^j)\mid i,j=1,\dots,N\}$
  \STATE Energies: $E_{ij}(a)= \sum_k E_\theta^{k}(z_k'{}^j,a, z^i)\;\forall\,i,j \in [N], k \in [K], a\in A$
  \STATE Policy logits: $\pi_{\mathrm{logits}}^n=\pi_w(z^n)\;\forall\,n$\\
  \COMMENT{The diagonal values are the energy that correspond to real transitions}
  \STATE Ratios:
    $\log r_a^{nn} = E_{nn}(a) - E_{nn}(a_0)$
  \STATE Policy weights:
  $\zeta^n_a = \log\frac{\pi(a_n\mid z^n)}{\pi(a_0\mid z^n)}$
  \STATE 
  $
    \mathcal{L}_r=-\frac{1}{N}\sum_n\!\sum_a[a^n= a]\log\sigma\left(\log r_a^{nn} + \text{sg
    }\left(\zeta_a^n\right)\right) +
    $\\
    $\hspace{2.75cm}[a^n \neq a]\log\left(1-\sigma\left(\log r_a^{nn} + \text{sg}\left(\zeta_a^n\right)\right)\right)
    $
  \STATE 
  $\displaystyle
    \mathcal{L}_{\mathrm{fwd}}=-\frac1N\sum_n\!\log\frac{e^{E_{nn}(a^n)}}{\sum_j e^{E_{nj}(a^n)}}
  $
    \STATE 
  $\displaystyle
    \mathcal{L}_{\mathrm{inv}}=-\frac1N\sum_n\!\log\frac{\pi(a^n\mid z^n)e^{ E_{nn}(a_n)}}{\sum_{a'} \pi(a'\mid z^n) e^{E_{nn}(a')}}
  $
  \STATE 
  $\displaystyle
    \mathcal{L}_{\pi}=\frac1N\sum_n\!-\log\frac{e^{\pi_{\mathrm{logits}}^n[a^n]}}{\sum_{a'}e^{\pi_{\mathrm{logits}}^n[a']}}
 $
  \STATE $\displaystyle \mathcal{L}=\beta_r\mathcal{L}_r+\beta_{\mathrm{fwd}}\mathcal{L}_{\mathrm{fwd}}+\beta_{\mathrm{inv}}\mathcal{L}_{\mathrm{inv}}+\beta_{\pi}\mathcal{L}_{\pi}$
  \STATE Update: $(\phi,\theta,w)\leftarrow\text{AdamW}((\phi,\theta,w), \alpha, \nabla\mathcal{L})$
\ENDFOR
\end{algorithmic}
\end{algorithm}

\textbf{Identifiability} $\;$ The core assumption of ACF is that variables are independently controllable, that is, for every state variable $s_i$, there exists a context $s \in \mathcal{S}$ and action $a \in A$, where the action effect is sufficiently different from the natural dynamics of the variable ($a_0$ effect).
The following theorem establishes identifiability of independently controllable factors if the solution found is sparse.

\begin{theorem}[Identifiability of the Independently Controllable Factors]
    Let the learned encoder $f: X\rightarrow Z$ be a diffeomorphism.
    If the following conditions hold
    \begin{enumerate}
        \item $\mathcal{S}\subset \mathbb{R}^K$ is connected and the unknown observation function $o: \mathcal{S}\rightarrow X$ is a diffeomorphism.
        \item The action effects are \textbf{sufficiently different} from the natural dynamics. That is, there exists $i\in[K]$
        $$
            \frac{\partial}{\partial s_i'}\frac{T_i(s_i'\mid s, a)}{T(s_i'\mid s, a_0)} \neq 0
        $$
        for $s\in \tilde{S} \subseteq \mathcal{S}$, almost surely. Moreover, there exists at least an action that affects each $s_i$ (independent controllability)
        \item All energy function approximate the factor forward dynamics $E(z_i', a, z) \propto \log T(z_i'\mid z, a)$;
        \item (\textbf{Sparsity}) The score differences (gradients of the energies) $$\frac{\partial}{\partial z_i'}\Delta E_i^a = \frac{\partial}{\partial z_i'} \left[E(z_i', a, z) - E(z_i', a_0, z)\right] \neq 0$$ for at most one variable $j$ and all actions.
    \end{enumerate}
    then, there exists a factor-wise diffeomorphism $h : \mathcal{S} \rightarrow Z$ between the underlying ground truth factors of variation $\mathcal{S}$ and the learned encoding $Z$
\end{theorem}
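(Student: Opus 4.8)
The plan is to transport all structure into the common latent space via the composite diffeomorphism $g := f\circ o : \mathcal{S}\to Z$, and to argue entirely at the level of \emph{score functions} (gradients of log-densities with respect to the next state), which are insensitive to the unknown normalizing constants hidden in Condition 3. First I would establish that the learned log-ratio equals the ground-truth log-ratio: because $f$ and $o$ are diffeomorphisms, the volume factors in Eq.~\eqref{eq:obs_dynamics} appear identically in the numerator and denominator of $r_a$ and cancel, so $\log r_a(z',z)=\log r_a(x',x)=\log r_a(s',s)$; combined with the energy parameterization this gives $\sum_i \Delta E_i^a(z_i',z,a)=\log\frac{T(s_j'\mid s,a)}{T(s_j'\mid s,a_0)}$ up to an additive constant that the next-state derivative annihilates.

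Next I would compute gradients on both sides and apply the chain rule. Fix an action $a$ whose ground-truth target is factor $j$ (one exists for every $j$ by independent controllability). On the true side, Eq.~\eqref{eq:transition} makes $\log r_a(s',s)$ a function of $s_j'$ alone, so $\nabla_{s'}\log r_a$ is supported on coordinate $j$ and, by Condition 2, is nonzero there on $\tilde S$. On the learned side, Condition 4 forces the gradient $\nabla_{z'}\log r_a$, whose $i$-th component is $\partial_{z_i'}\Delta E_i^a$, to be supported on a single coordinate $k=k(a)$. Writing $z'=g(s')$ and differentiating the identity $\log r_a(s',s)=\log r_a(g(s'),g(s))$ yields $\partial_{s_i'}\log r_a = \partial_{z_k'}\log r_a\cdot[J_g(s')]_{ki}$. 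Since the left side vanishes for $i\neq j$ while $\partial_{z_k'}\log r_a\neq 0$ (otherwise the true score would vanish too), I conclude $[J_g(s')]_{ki}=0$ for all $i\neq j$ and $[J_g(s')]_{kj}\neq 0$: row $k$ of the Jacobian carries its single nonzero entry in column $j$.

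I would then promote this pointwise, per-action statement to a global factorization. Running the argument over one admissible action for each factor $j\in[K]$ yields a map $j\mapsto k(j)$; it is injective, since $k(j_1)=k(j_2)$ would force the same row to be supported simultaneously in columns $j_1$ and $j_2$, contradicting the single-support conclusion together with the non-vanishing of its active entry. Hence $\rho:=k$ is a permutation, every row of $J_g$ has exactly one nonzero entry, and by invertibility of $g$ so does every column. Finally I would invoke connectedness of $\mathcal{S}$ and continuity of $J_g$: a continuous, everywhere-nonsingular Jacobian whose rows each carry a single nonvanishing entry must keep a fixed permutation pattern on a connected domain, so $\partial g_{k(j)}/\partial s_i\equiv 0$ for $i\neq j$ and each component $g_{k(j)}$ depends on $s_j$ alone. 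This exhibits $h:=g$ as a factor-wise diffeomorphism with permutation $\rho$, as claimed.

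The hard part will be this globalization. Conditions 2 and 4 are stated pointwise and only \emph{almost surely} on a subset $\tilde S$, so I must both extend the conclusion from $\tilde S$ to all of $\mathcal{S}$ and rule out that the Jacobian's active column migrates between coordinates across $\mathcal{S}$ (which would destroy factor-wiseness). Connectedness plus nonsingularity of $J_g$ is exactly what forbids such a migration---were the active column to switch, an entry would have to cross zero while the row remains nonzero and of single support, impossible for a continuous nonsingular map---but making this rigorous while handling the measure-zero exceptional sets and the additive-constant ambiguity of Condition 3 is where the genuine work lies. A secondary subtlety is verifying that the noise-contrastive objective is actually minimized at the true ratio so that Condition 3 holds at the optimum, which I would take as given by standard NCE consistency.
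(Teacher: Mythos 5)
Your proposal follows essentially the same route as the paper's proof: both define the composite diffeomorphism $\mathcal{S}\to Z$, equate learned and ground-truth score differences of the log-ratio via the chain rule, use Condition 2 (support on a single true coordinate) together with Condition 4 (support on a single latent coordinate) to force each relevant row of the Jacobian to have exactly one nonzero entry, and then invoke connectedness and continuity of the Jacobian to make the resulting permutation globally constant. The paper packages the per-action argument as a matrix identity $\Delta S(s'\mid s) = J_h^T(s')\,\Delta S(z'\mid z)$ rather than arguing row by row, but this is a difference of presentation, not of substance, and your flagged concerns (the almost-sure/measure-zero caveats and the additive-constant ambiguity) are ones the paper's proof also passes over without further elaboration.
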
\label{th:identifiability}

Similar arguments have been used to establish identifiability under action and state-dependency sparsity \citep{lachapelle2022disentanglement, lachapelle2024nonparametric} and under single-node interventions in causal representation learning \citep{varici2024general}. Theorem~\ref{th:identifiability} can be viewed as a special case of these results adapted to the independently controllable factors setting\footnote{The proof is provided in Appendix~\ref{appendix:proofs}}.
This result further shows that independently controllable factors can be recovered when, in addition to certain regularity and variability conditions, the solution is sparse (Condition~4). We conjecture that the binary classifiers arising from the $\mathcal{L}_r$ loss promote sparsity by \textit{competing} to capture what makes each action distinct with respect to both the natural dynamics and the other actions—namely, the specific factor influenced by the action. In the next section, we demonstrate empirically that ACF indeed identifies the independently controllable factors in practice.





\section{Evaluation}
\begin{figure}[t]
    \centering
    \includegraphics[width=0.7\linewidth]{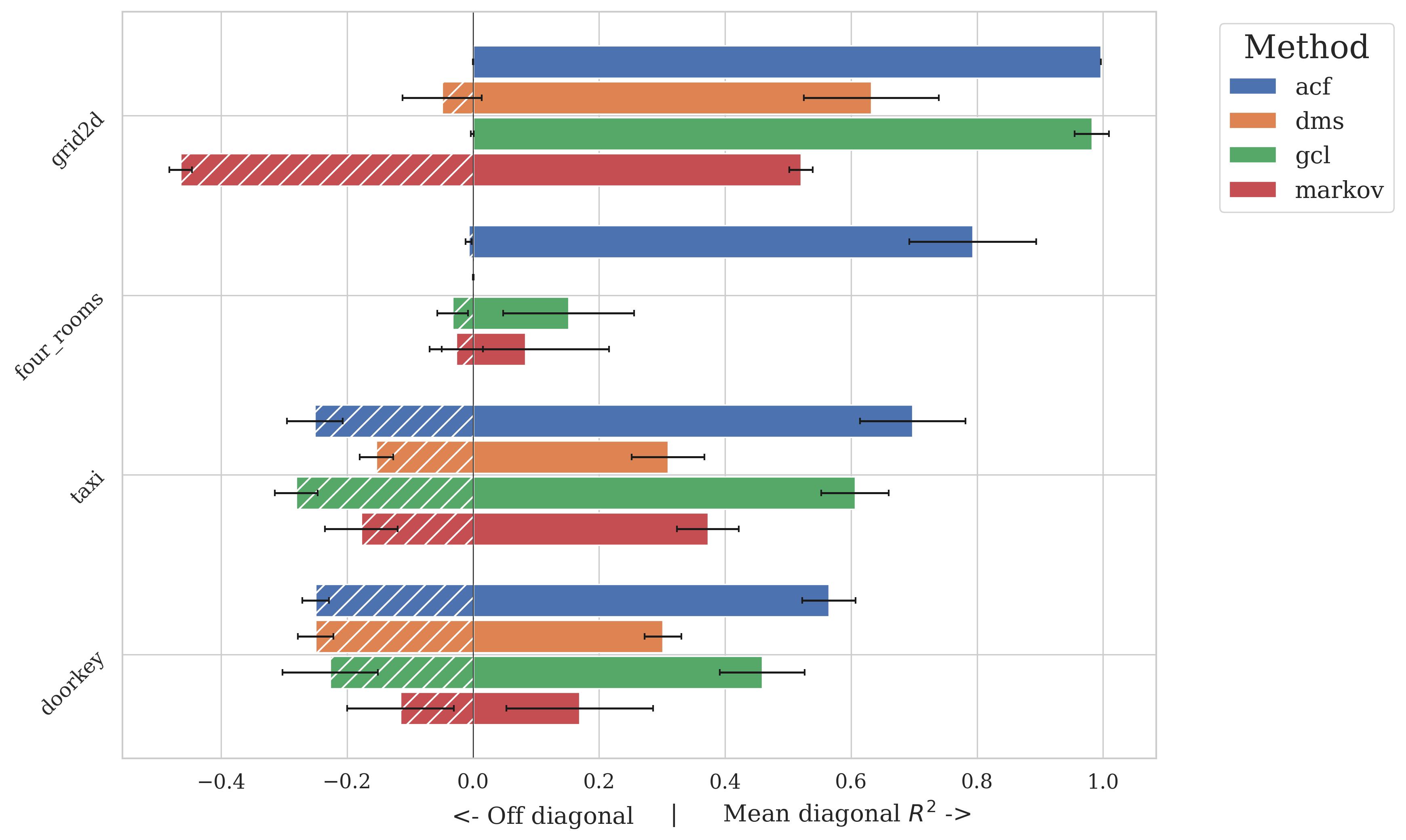}
    \caption{\textbf{Factorization metrics}. The left side bars show how much of the information is represented off the diagonal on average over all variables. The right side bars represent the mean diagonal value. Ideally, we would expect our $R^2$ matrices to be close to the identity: $0$ on the left bar,  $1$ on the right bar. The error bars show the standard deviation over $5$ independent seeds.}
    \label{fig:diag-results}
\end{figure}

In this section, we empirically evaluate ACF in RL test domains directly from pixel observations. 
We consider the visual variation of the classical Taxi domain \citep{dietterich2000hierarchical} and visual Minigrid environments \citep{MinigridMiniworld23}: FourRooms \citep{sutton1999between} and DoorKey\footnote{We use Minigrid JAX \citep{jax2018github} re-implementation \citep{pignatelli2024navix}}.
We chose these domains because they allow easy access to the generating factors for evaluation and while these domains are simple from the perspective of learning a policy, they actually are challenging from the factorization problem perspective, as we will see in the quantitative results.

\textbf{Baselines} We consider GCL (Generalized Contrastive Learning; \cite{hyvarinen2019nonlinear}) that can be seen as a vanilla contrastive-based disentanglement algorithm, and DMS (Disentanglement via Mechanism Sparsity; \cite{lachapelle2022disentanglement}), a VAE-based \citep{kingma2014vae} method
that explicitly maximizes sparsity in state dependencies and action effects to drive disentanglement. 
Moreover, we consider MSA (Markov State Abstractions; \cite{allen2021learning}), a contrastive-based algorithm that leverages both forward and inverse dynamics to learn Markovian representations but does not explicitly optimize for disentanglement.  

\textbf{Evaluation Protocol} To measure disentanglement, we consider test datasets of pairs of $\{(s^i, z^i)\}_i$ where $s$ is the ground truth representation and $z$ is the corresponding learned latent representation. Then, we fit factor-wise regressors (parameterized by feed-forward networks), $h_{ij}(z_i) \mapsto s_j$.
The performance of $h_{ij}$ is limited by the amount of information $z_i$ contains about $s_j$, therefore we measure the quality of the learned regressor using the coefficient of determination $R^2$.
Therefore, for each method we have a matrix $R^2$ (see Figure \ref{fig:heatmaps}); this matrix would have $1$ in the diagonal and low off-diagonal values if the ground truth variables were perfectly identified.
We tune all methods via random search in their respective hyperparameter space and train $5$ seeds for each method (see Appendix \ref{appendix:exps}).
\begin{figure}[h]
    \centering
    \includegraphics[width=0.99\linewidth]{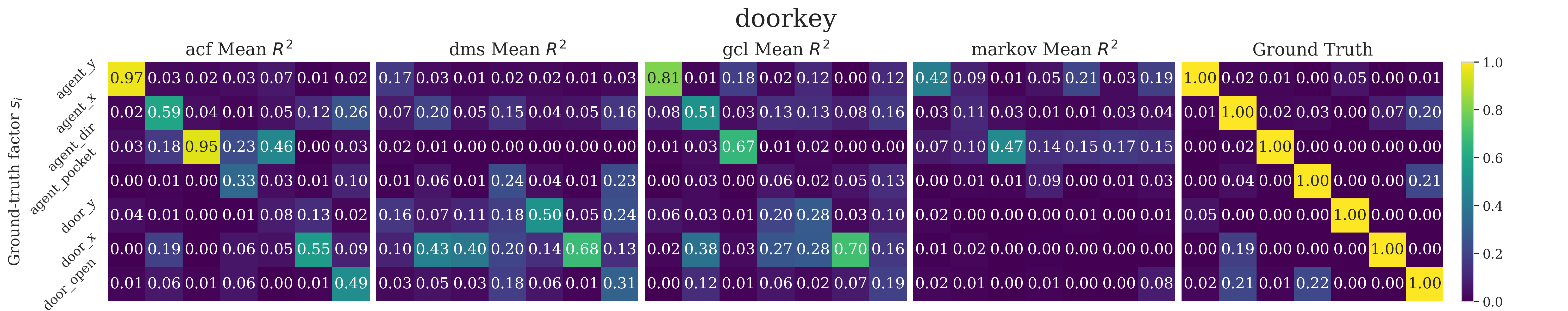}
    \caption{\textbf{Factorization matrices for DoorKey.} Mean $R^2$ matrices over $5$ seeds.}
    \label{fig:heatmaps}
\end{figure}

\textbf{Quantitative results} Given a $R^2$ matrix, we search a permutation that maximizes the diagonal using the Hungarian algorithm \citep{kuhn1955hungarian}. We then aggregate the matrices into two scores, the mean diagonal value, $\frac{1}{K}\sum_i^K R_{ii}^2$ and the mean maximum off-diagonal value $\frac{1}{K}\sum_i^K \max_{j\neq i} R_{ij}^2$. The former measures how well a latent factor represents the ground truth factor, and the latter measures how much information is contained in the rest of the factors.
Ideally, this would mean a score of $1$ for the mean diagonal and $0$ for the off diagonal if the identification is perfect and the factors are fully independent.
However, this is only an upper bound on perfect performance in many environments; e.g. taxi and passenger's position are not fully independent because the passenger can only move if it moves with the taxi.
Figure \ref{fig:diag-results} shows the results for all methods and domains. 

\textbf{The factor affected by an action depends on the current state $s$.} Grid2D and FourRooms have different factorizations: In grid2D the agent can move up, down, left and right and $2$ dimensions are enough as controllable factors, but in FourRooms (Minigrid variant) the agent can rotate, move forward, backward, left or right and, hence, $3$ factors are required. More importantly, the factor an action affects is \textit{relative to the agent's orientation} and, this, change causes difficulties for all baseline methods. In particular, DMS, which assumes a global sparse graph, struggles to converge.

\begin{figure}[t]
    \centering
    \begin{subfigure}[t]{0.6\linewidth}
        \includegraphics[trim=0 5 0 0, width=\linewidth]{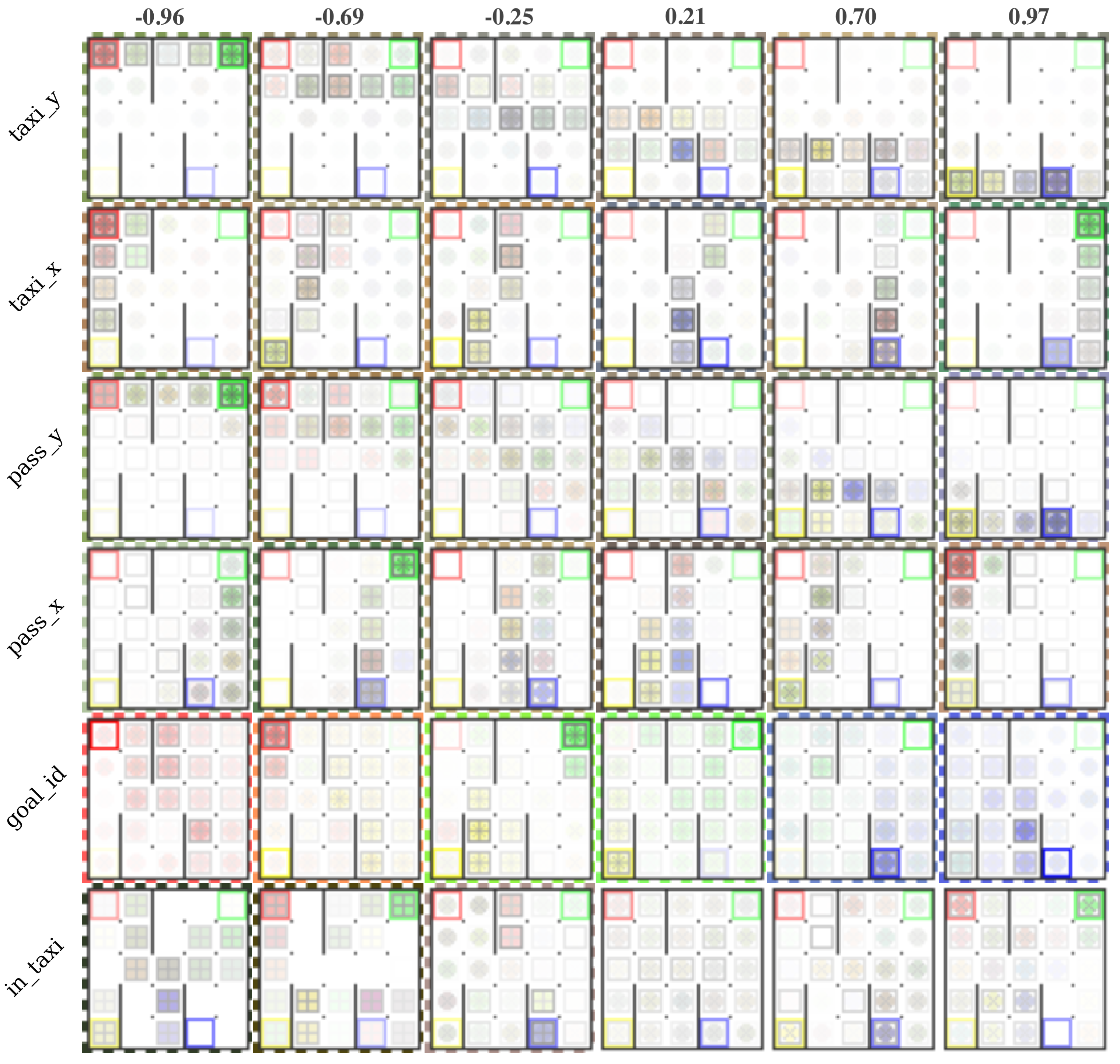}
    \end{subfigure}
    \hspace{0.25cm}
    \begin{subfigure}[t]{0.16\linewidth}\includegraphics[width=0.595\linewidth]{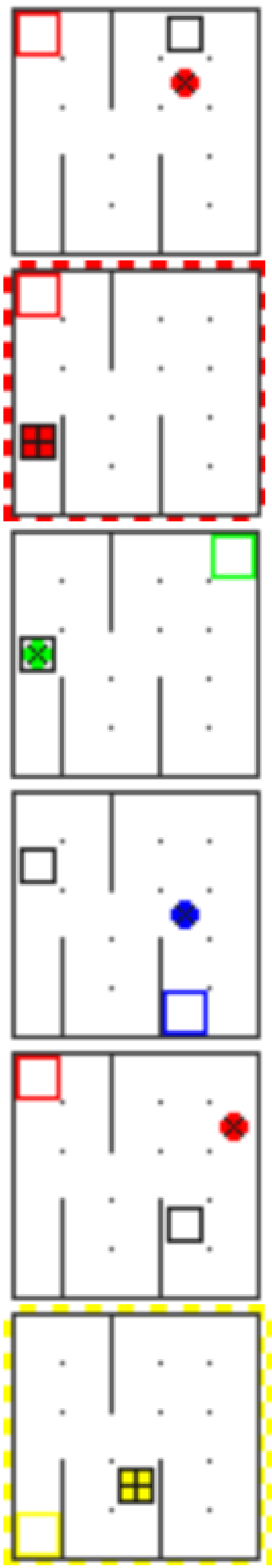}
    \end{subfigure}
    
    \caption{\textbf{Taxi latent traversals.} In this Taxi rendering, the taxi is represented by a hollow square, the passengers are circles with colors matching their goal positions. When a passenger is in the taxi, the border of the frame is highlighted with stripes. By varying the value of a latent variable (columns), we can see its effect on the mean observation. Each row represents different latent variables.}\label{fig:taxi_traversal}
\end{figure}

\textbf{Factors are not independent} In the Taxi domain, factorization is more challenging because the taxi’s position and the passenger’s location are inherently coupled; the passenger can move only if it moves with the taxi. Our method outperforms the baselines in this case. Figure \ref{fig:taxi_traversal} shows qualitatively the effect of traversing the identified passengers position variables.

\textbf{Identifying non-controllable variables} In the DoorKey domain, not all factors are controllable by the agent: the door’s position is sampled at the start of each episode and kept fixed throughout the episode. Although the agent must perceive the door’s location to open it, that factor need not be disentangled. In fact, as seen in Figure \ref{fig:heatmaps}, the door $y$ coordinate is not identified. DMS, instead is able to partially identify these variables because it's not constrained to controllable elements and uses the sparsity of the state dependencies.

\textbf{Ablation} We performed an ablation study in the Minigrid-Doorkey domain, the most challenging environment considered in the previous experiments. Table~\ref{tab:ablation} shows that each loss term plays an important role in improving factorization.
In addition, we evaluate the \textit{Factored Markov} variant, which combines our unified factored energy parameterization with the MSA losses (forward and inverse). This variant achieves improved factorization compared to the original MSA, highlighting the importance of the our proposed parameterization.

\begin{table}[h!]
\centering
\begin{tabular}{lcc}
\hline
\textbf{Experiment} & \textbf{Diag Score (Mean $\pm$ Std) $\uparrow$} & \textbf{Off-Diag Score (Mean $\pm$ Std)$\downarrow$} \\
\hline
ACF (full)       & \textbf{0.5650} $\pm$ 0.0423 & 0.2499 $\pm$ 0.0213 \\
Factored Markov  & 0.4861 $\pm$ 0.0491 & 0.2635 $\pm$ 0.0339 \\
no fwd           & 0.1987 $\pm$ 0.0588 & 0.1028 $\pm$ 0.0340 \\
no inv           & 0.5294 $\pm$ 0.0274 & 0.1918 $\pm$ 0.0280 \\
no policy        & 0.4630 $\pm$ 0.0694 & 0.2301 $\pm$ 0.0543 \\
no ratio         & 0.5083 $\pm$ 0.0767 & 0.2353 $\pm$ 0.0352 \\
\hline
\end{tabular}
\caption{Ablation on Minigrid-Doorkey Enviroment over $5$ seeds}
\label{tab:ablation}
\end{table}


\begin{figure}[t]
    \centering
    \includegraphics[width=0.5\linewidth]{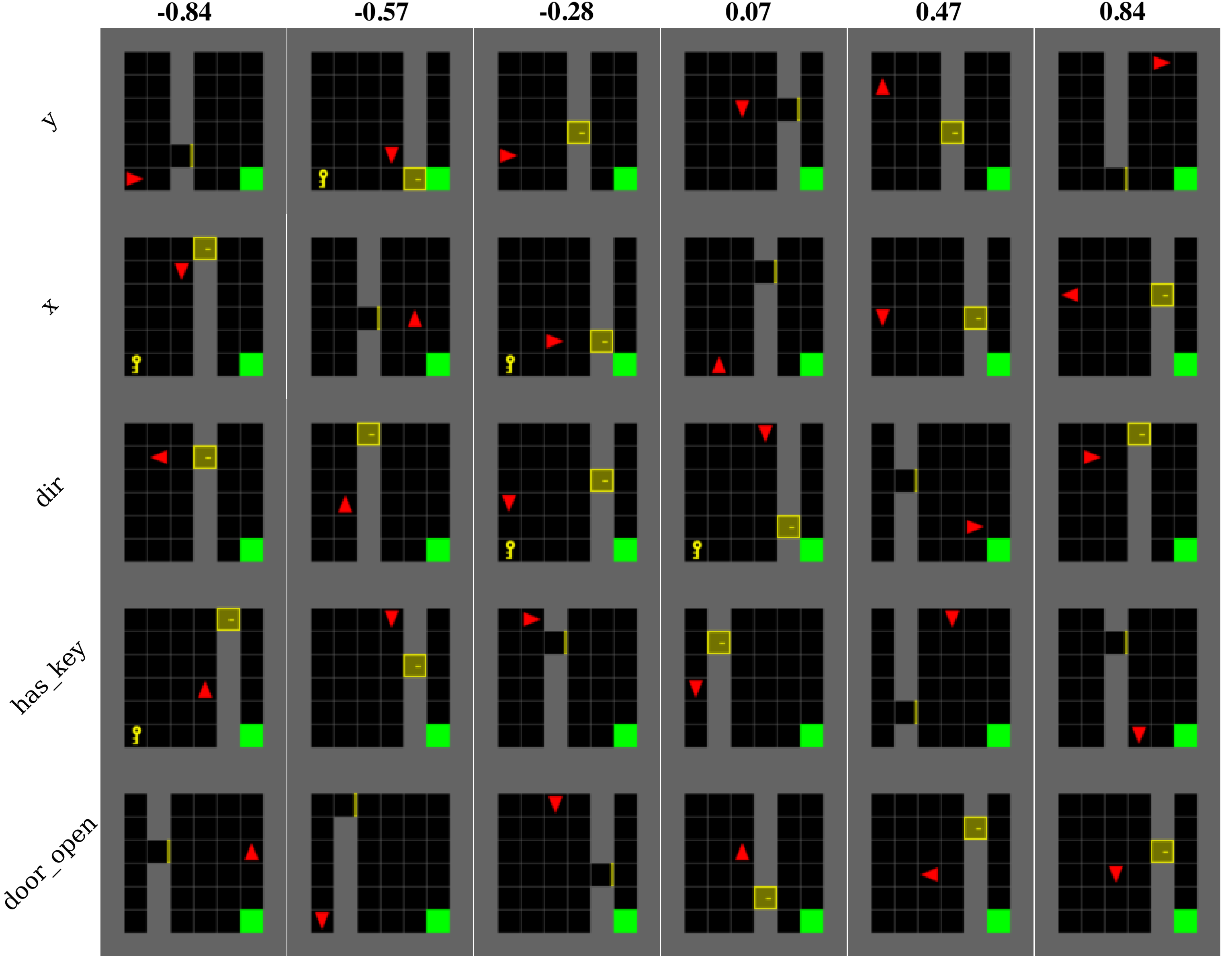}
    \caption{\textbf{DoorKey latent traversals}. For this domain, we show a random sample from observations that have a particular value of the latent dimension. We only show the controllable elements in DoorKey, that includes the agent position and orientation, the key and the door state. Different rows correspond to different latent variables and different columns represent different values for the corresponding latent variable.}
    \label{fig:enter-label}
\end{figure}

\section{Related Works}
\textbf{Factored RL} There is a long history of leveraging the structure of FMDPs for efficient planning algorithms. By assuming that the structure of the MDP is known (i.e., the DBN), these algorithms exponentially reduce the size of the problem representation. Such algorithms include Structured Value Iteration algorithms \citep{boutilier1996approximate, boutilier2000stochastic}
and Structured Policy Iteration \citep{boutilier1995exploiting, koller2000policy}, and their extensions to linear approximation \citep{guestrin2003efficient}.
In classical PAC model-based RL algorithms, the Factored $E^3$ algorithm \citep{kearns1999efficient} extends the $E^3$ algorithm \citep{kearns2002near} to the case where the DBN is known and an oracle factored planner is available.
\citeauthor{guestrin2002algorithm} instantiate this algorithm and RMax \cite{brafman2002rmax} using factored linear value iteration \citep{guestrin2002algorithm} as the planner.
Algorithms such as SLF-RMax \citep{strehl2007structured}, Met-RMax \citep{diuk2009adaptive} and SPITI \citep{degris2006learning} loosen the assumption of a known DBN structure and discover this structure online for discrete state spaces. \cite{vigorito2009incremental} extends structure learning to continuous state and action spaces.
Further theoretical work includes regret bounds for factored RL in the episodic \citep{osband2014near, tian2020towards} 
and non-episodic settings \citep{xu2020near}.

The discovery of the structure among the factored state variables has been used to do exploration \citep{seitzer2021causal,wang2023elden}. 
Closely related, in skill discovery, the factored structure can be exploited to generate signals that facilitate learning useful skills \citep{vigorito2010skills, hu2024disentangled, wang2024skild, chuck2024granger, chuck2025null}. 
Moreover, leveraging the sparsity of edges in the DBN enables algorithms to learn modular world models that are robust \citep{wang2022causal, ke2systematic}.
Counterfactual Data Augmentation (CODA) \citep{pitis2020counterfactual, pitis2022mocoda} uses the \textit{local} DBN structure to generate plausible transitions by recombining states based on conditional independence relations that hold locally---i.e., leverages local DBNs to have a nonparametric transition model. However, all of these require knowing the factored representation a priori for these methods to work.

Limited attempts exist to learn factorized representations in deep RL from high-dimensional observations. DARLA \citep{higgins2017darla} leverages $\beta$-VAE representations for zero-shot generalization in multitask RL.
Some works that try to learn factored world models include variational causal dynamics models \citep{lei2022variational} and provable factored RL \citep{misra2021provable}. DenoisedMDP \citep{wang2022denoised, liu2023learning} factorizes the state in four factors based on their relevance to reward and controllability. 
TED (Temporal Disentanglement; \cite{dunion2023temporal}) uses NCE from state transition samples to improve model-free agent robustness to correlated, irrelevant features, and CMID (Conditional Mutual Information for Disentanglement; \cite{dunion2024conditional}) uses causal, graphical conditions to infer the state factor from pixels. 
Perhaps, most similar to ours, is the work of \cite{thomas2018disentangling} that proposes to learn independently controllable elements by learning policies that minimize the number of variables changed. However, they obtain limited success in fully aligning a 2D grid learned representation with the ground truth.


\textbf{Disentanglement in Representation Learning} In representation learning \citep{bengio2013representation}, disentanglement has been extensively studied \citep{schmidhuber1992learning, higgins2017betavae, burgess2018understanding, chen2018isolating, klindt2020towards} as a desirable characteristic for generalizable representations. However, a widely accepted definition of disentanglement does not yet exist and solving this problem without the right inductive biases is impossible \citep{locatello2019challenging}.  
Unsupervised approaches leverage the Variational Autoencoder (VAE; \cite{kingma2014vae}) to learn latent representations that have a factorized prior \citep{kim2018disentangling}, minimize total correlation \citep{chen2018isolating}, and leverage temporal relations \citep{klindt2020towards}. 
An important formalization of disentanglement is Independent Component Analysis (ICA; \cite{comon1994independent}). In particular, non-linear ICA \citep{hyvarinen2023nonlinear}, where a set of source variables is entangled by an unknown non-linear function. Approaches in non-linear ICA include contrastive methods \citep{hyvarinen2019nonlinear, hyvarinen2016unsupervised}, energy functions \citep{khemakhem2020ice}, quantized methods \citep{hsu2024disentanglement, hsu2024tripod}, VAEs \citep{khemakhem2020variational, klindt2020towards} and sparse graphical conditions such as DMS (Disentanglement via Mechanism Sparsity; \cite{lachapelle2022disentanglement})
%
Another approach is causal representation learning \citep{scholkopf2021toward}. This tackles the problem of discovering the \textit{causal} variables by leveraging data coming from interventional and observational distributions. In this problem, the variables are not assumed to be independent as in ICA. 
Simple methods assume having access about what variable was intervened \citep{lippe2022citris, lippe2023causal, locatello2020weakly} and assume binary interventions \citep{lippe2023biscuit}. Recent works establish identifiability results for linear mixing models \citep{squires2023linear}, non-linear mixing \citep{ahuja2023interventional, buchholz2023learning, zhang2023identifiability} and non-parametric in the case of unknown interventions (i.e., without labels of the intervened variable) \citep{von2024nonparametric, varici2024general}. 
ACF can be interpreted as a special case where the agent's actions induce interventional distributions and the natural dynamics are simply the observational distributions.
 
\section{Conclusion}
We introduced a new contrastive algorithm for learning a factored representation that recovers the independently controllable variables from high-dimensional observations. We use the fact that RL agents can act upon their environments and create discrepancies in the dynamics; contrasting those controlled dynamics to the natural order of things provides a signal for disentanglement that is relevant for factored RL.
Moreover, we showed empirically that our method is able to recover the relevant controllable factors. Nevertheless, not all relevant state variables are necessarily one-step controllable or controllable at all, but we still might be interested in identifying those variables depending on the problem.
Therefore, improving agents control over its environment can also serve to further refine factorization and, though out of the scope of this paper, it is an important research direction.



\bibliography{ref}

\begin{thebibliography}{87}
\providecommand{\natexlab}[1]{#1}
\providecommand{\url}[1]{\texttt{#1}}
\expandafter\ifx\csname urlstyle\endcsname\relax
  \providecommand{\doi}[1]{doi: #1}\else
  \providecommand{\doi}{doi: \begingroup \urlstyle{rm}\Url}\fi

\bibitem[Ahuja et~al.(2023)Ahuja, Mahajan, Wang, and Bengio]{ahuja2023interventional}
Kartik Ahuja, Divyat Mahajan, Yixin Wang, and Yoshua Bengio.
\newblock Interventional causal representation learning.
\newblock In \emph{International conference on machine learning}, pp.\  372--407. PMLR, 2023.

\bibitem[Allen et~al.(2021)Allen, Parikh, Gottesman, and Konidaris]{allen2021learning}
Cameron Allen, Neev Parikh, Omer Gottesman, and George Konidaris.
\newblock Learning {M}arkov state abstractions for deep reinforcement learning.
\newblock \emph{Advances in Neural Information Processing Systems}, 34:\penalty0 8229--8241, 2021.

\bibitem[Bengio et~al.(2013)Bengio, Courville, and Vincent]{bengio2013representation}
Yoshua Bengio, Aaron Courville, and Pascal Vincent.
\newblock Representation learning: A review and new perspectives.
\newblock \emph{IEEE transactions on pattern analysis and machine intelligence}, 35\penalty0 (8):\penalty0 1798--1828, 2013.

\bibitem[Boothby(2003)]{boothby2003introduction}
William~M Boothby.
\newblock \emph{An introduction to differentiable manifolds and Riemannian geometry, Revised}, volume 120.
\newblock Gulf Professional Publishing, 2003.

\bibitem[Boutilier \& Dearden(1996)Boutilier and Dearden]{boutilier1996approximate}
Craig Boutilier and Richard Dearden.
\newblock Approximate value trees in structured dynamic programming.
\newblock In \emph{Proceedings of the Thirteenth International Conference on International Conference on Machine Learning}, pp.\  54--62, 1996.

\bibitem[Boutilier et~al.(1995)Boutilier, Dearden, Goldszmidt, et~al.]{boutilier1995exploiting}
Craig Boutilier, Richard Dearden, Mois{\'e}s Goldszmidt, et~al.
\newblock Exploiting structure in policy construction.
\newblock In \emph{IJCAI}, volume~14, pp.\  1104--1113, 1995.

\bibitem[Boutilier et~al.(2000)Boutilier, Dearden, and Goldszmidt]{boutilier2000stochastic}
Craig Boutilier, Richard Dearden, and Mois{\'e}s Goldszmidt.
\newblock Stochastic dynamic programming with factored representations.
\newblock \emph{Artificial intelligence}, 121\penalty0 (1-2):\penalty0 49--107, 2000.

\bibitem[Bradbury et~al.(2018)Bradbury, Frostig, Hawkins, Johnson, Leary, Maclaurin, Necula, Paszke, Vander{P}las, Wanderman-{M}ilne, and Zhang]{jax2018github}
James Bradbury, Roy Frostig, Peter Hawkins, Matthew~James Johnson, Chris Leary, Dougal Maclaurin, George Necula, Adam Paszke, Jake Vander{P}las, Skye Wanderman-{M}ilne, and Qiao Zhang.
\newblock {JAX}: composable transformations of {P}ython+{N}um{P}y programs, 2018.
\newblock URL \url{http://github.com/jax-ml/jax}.

\bibitem[Brafman \& Tennenholtz(2002)Brafman and Tennenholtz]{brafman2002rmax}
Ronen~I Brafman and Moshe Tennenholtz.
\newblock R-max-a general polynomial time algorithm for near-optimal reinforcement learning.
\newblock \emph{Journal of Machine Learning Research}, 3\penalty0 (Oct):\penalty0 213--231, 2002.

\bibitem[Buchholz et~al.(2023)Buchholz, Rajendran, Rosenfeld, Aragam, Sch{\"o}lkopf, and Ravikumar]{buchholz2023learning}
Simon Buchholz, Goutham Rajendran, Elan Rosenfeld, Bryon Aragam, Bernhard Sch{\"o}lkopf, and Pradeep Ravikumar.
\newblock Learning linear causal representations from interventions under general nonlinear mixing.
\newblock \emph{Advances in Neural Information Processing Systems}, 36:\penalty0 45419--45462, 2023.

\bibitem[Burgess et~al.(2018)Burgess, Higgins, Pal, Matthey, Watters, Desjardins, and Lerchner]{burgess2018understanding}
Christopher~P Burgess, Irina Higgins, Arka Pal, Loic Matthey, Nick Watters, Guillaume Desjardins, and Alexander Lerchner.
\newblock Understanding disentangling in beta-{VAE}.
\newblock \emph{arXiv preprint arXiv:1804.03599}, 2018.

\bibitem[Chen et~al.(2018)Chen, Li, Grosse, and Duvenaud]{chen2018isolating}
Ricky~TQ Chen, Xuechen Li, Roger~B Grosse, and David~K Duvenaud.
\newblock Isolating sources of disentanglement in variational autoencoders.
\newblock \emph{Advances in neural information processing systems}, 31, 2018.

\bibitem[Chevalier-Boisvert et~al.(2023)Chevalier-Boisvert, Dai, Towers, de~Lazcano, Willems, Lahlou, Pal, Castro, and Terry]{MinigridMiniworld23}
Maxime Chevalier-Boisvert, Bolun Dai, Mark Towers, Rodrigo de~Lazcano, Lucas Willems, Salem Lahlou, Suman Pal, Pablo~Samuel Castro, and Jordan Terry.
\newblock Minigrid \& miniworld: Modular \& customizable reinforcement learning environments for goal-oriented tasks.
\newblock \emph{CoRR}, abs/2306.13831, 2023.

\bibitem[Chuck et~al.(2024)Chuck, Black, Arjun, Zhu, and Niekum]{chuck2024granger}
Caleb Chuck, Kevin Black, Aditya Arjun, Yuke Zhu, and Scott Niekum.
\newblock Granger causal interaction skill chains.
\newblock \emph{Transactions on Machine Learning Research}, 2024.
\newblock ISSN 2835-8856.
\newblock URL \url{https://openreview.net/forum?id=iA2KQyoun1}.

\bibitem[Chuck et~al.(2025)Chuck, Feng, Qi, Shi, Agarwal, Zhang, and Niekum]{chuck2025null}
Caleb Chuck, Fan Feng, Carl Qi, Chang Shi, Siddhant Agarwal, Amy Zhang, and Scott Niekum.
\newblock Null counterfactual factor interactions for goal-conditioned reinforcement learning.
\newblock In \emph{The Thirteenth International Conference on Learning Representations}, 2025.

\bibitem[Comon(1994)]{comon1994independent}
Pierre Comon.
\newblock Independent component analysis, a new concept?
\newblock \emph{Signal processing}, 36\penalty0 (3):\penalty0 287--314, 1994.

\bibitem[Degris et~al.(2006)Degris, Sigaud, and Wuillemin]{degris2006learning}
Thomas Degris, Olivier Sigaud, and Pierre-Henri Wuillemin.
\newblock Learning the structure of factored markov decision processes in reinforcement learning problems.
\newblock In \emph{Proceedings of the 23rd international conference on Machine learning}, pp.\  257--264, 2006.

\bibitem[Dietterich(2000)]{dietterich2000hierarchical}
Thomas~G Dietterich.
\newblock Hierarchical reinforcement learning with the maxq value function decomposition.
\newblock \emph{Journal of artificial intelligence research}, 13:\penalty0 227--303, 2000.

\bibitem[Diuk et~al.(2009)Diuk, Li, and Leffler]{diuk2009adaptive}
Carlos Diuk, Lihong Li, and Bethany~R Leffler.
\newblock The adaptive k-meteorologists problem and its application to structure learning and feature selection in reinforcement learning.
\newblock In \emph{Proceedings of the 26th Annual International Conference on Machine Learning}, pp.\  249--256, 2009.

\bibitem[Dunion et~al.(2023)Dunion, McInroe, Luck, Hanna, and Albrecht]{dunion2023temporal}
Mhairi Dunion, Trevor McInroe, Kevin~Sebastian Luck, Josiah~P. Hanna, and Stefano~V Albrecht.
\newblock Temporal disentanglement of representations for improved generalisation in reinforcement learning.
\newblock In \emph{The Eleventh International Conference on Learning Representations}, 2023.
\newblock URL \url{https://openreview.net/forum?id=sPgP6aISLTD}.

\bibitem[Dunion et~al.(2024)Dunion, McInroe, Luck, Hanna, and Albrecht]{dunion2024conditional}
Mhairi Dunion, Trevor McInroe, Kevin~Sebastian Luck, Josiah Hanna, and Stefano Albrecht.
\newblock Conditional mutual information for disentangled representations in reinforcement learning.
\newblock \emph{Advances in Neural Information Processing Systems}, 36, 2024.

\bibitem[Guestrin et~al.(2002)Guestrin, Patrascu, and Schuurmans]{guestrin2002algorithm}
Carlos Guestrin, Relu Patrascu, and Dale Schuurmans.
\newblock Algorithm-directed exploration for model-based reinforcement learning in factored mdps.
\newblock In \emph{Proceedings of the Nineteenth International Conference on Machine Learning}, pp.\  235--242, 2002.

\bibitem[Guestrin et~al.(2003)Guestrin, Koller, Parr, and Venkataraman]{guestrin2003efficient}
Carlos Guestrin, Daphne Koller, Ronald Parr, and Shobha Venkataraman.
\newblock Efficient solution algorithms for factored mdps.
\newblock \emph{Journal of Artificial Intelligence Research}, 19:\penalty0 399--468, 2003.

\bibitem[Gutmann \& Hyvärinen(2010)Gutmann and Hyvärinen]{gutmann2010nce}
Michael Gutmann and Aapo Hyvärinen.
\newblock Noise-contrastive estimation: A new estimation principle for unnormalized statistical models.
\newblock In Yee~Whye Teh and Mike Titterington (eds.), \emph{Proceedings of the Thirteenth International Conference on Artificial Intelligence and Statistics}, volume~9 of \emph{Proceedings of Machine Learning Research}, pp.\  297--304, Chia Laguna Resort, Sardinia, Italy, 13--15 May 2010. PMLR.

\bibitem[Hafner et~al.(2019)Hafner, Lillicrap, Fischer, Villegas, Ha, Lee, and Davidson]{hafner2019learning}
Danijar Hafner, Timothy Lillicrap, Ian Fischer, Ruben Villegas, David Ha, Honglak Lee, and James Davidson.
\newblock Learning latent dynamics for planning from pixels.
\newblock In \emph{International conference on machine learning}, pp.\  2555--2565. PMLR, 2019.

\bibitem[Hafner et~al.(2025)Hafner, Pasukonis, Ba, and Lillicrap]{hafner2025mastering}
Danijar Hafner, Jurgis Pasukonis, Jimmy Ba, and Timothy Lillicrap.
\newblock Mastering diverse control tasks through world models.
\newblock \emph{Nature}, pp.\  1--7, 2025.

\bibitem[Hansen et~al.(2022)Hansen, Su, and Wang]{hansen2022temporal}
Nicklas~A Hansen, Hao Su, and Xiaolong Wang.
\newblock Temporal difference learning for model predictive control.
\newblock In \emph{International Conference on Machine Learning}, pp.\  8387--8406. PMLR, 2022.

\bibitem[Heek et~al.(2024)Heek, Levskaya, Oliver, Ritter, Rondepierre, Steiner, and van {Z}ee]{flax2020github}
Jonathan Heek, Anselm Levskaya, Avital Oliver, Marvin Ritter, Bertrand Rondepierre, Andreas Steiner, and Marc van {Z}ee.
\newblock {F}lax: A neural network library and ecosystem for {JAX}, 2024.
\newblock URL \url{http://github.com/google/flax}.

\bibitem[Hendrycks \& Gimpel(2016)Hendrycks and Gimpel]{hendrycks2016gaussian}
Dan Hendrycks and Kevin Gimpel.
\newblock Gaussian error linear units (gelus).
\newblock \emph{arXiv preprint arXiv:1606.08415}, 2016.

\bibitem[Higgins et~al.(2017{\natexlab{a}})Higgins, Matthey, Pal, Burgess, Glorot, Botvinick, Mohamed, and Lerchner]{higgins2017betavae}
Irina Higgins, Loic Matthey, Arka Pal, Christopher Burgess, Xavier Glorot, Matthew Botvinick, Shakir Mohamed, and Alexander Lerchner.
\newblock beta-{VAE}: Learning basic visual concepts with a constrained variational framework.
\newblock In \emph{International Conference on Learning Representations}, 2017{\natexlab{a}}.

\bibitem[Higgins et~al.(2017{\natexlab{b}})Higgins, Pal, Rusu, Matthey, Burgess, Pritzel, Botvinick, Blundell, and Lerchner]{higgins2017darla}
Irina Higgins, Arka Pal, Andrei Rusu, Loic Matthey, Christopher Burgess, Alexander Pritzel, Matthew Botvinick, Charles Blundell, and Alexander Lerchner.
\newblock Darla: Improving zero-shot transfer in reinforcement learning.
\newblock In \emph{International Conference on Machine Learning}, pp.\  1480--1490. PMLR, 2017{\natexlab{b}}.

\bibitem[Hsu et~al.(2024{\natexlab{a}})Hsu, Dorrell, Whittington, Wu, and Finn]{hsu2024disentanglement}
Kyle Hsu, William Dorrell, James Whittington, Jiajun Wu, and Chelsea Finn.
\newblock Disentanglement via latent quantization.
\newblock \emph{Advances in Neural Information Processing Systems}, 36, 2024{\natexlab{a}}.

\bibitem[Hsu et~al.(2024{\natexlab{b}})Hsu, Hamid, Burns, Finn, and Wu]{hsu2024tripod}
Kyle Hsu, Jubayer~Ibn Hamid, Kaylee Burns, Chelsea Finn, and Jiajun Wu.
\newblock Tripod: Three complementary inductive biases for disentangled representation learning.
\newblock In \emph{International Conference on Machine Learning}, pp.\  19101--19122. PMLR, 2024{\natexlab{b}}.

\bibitem[Hu et~al.(2024)Hu, Wang, Stone, and Mart{\'\i}n-Mart{\'\i}n]{hu2024disentangled}
Jiaheng Hu, Zizhao Wang, Peter Stone, and Roberto Mart{\'\i}n-Mart{\'\i}n.
\newblock Disentangled unsupervised skill discovery for efficient hierarchical reinforcement learning.
\newblock \emph{Advances in Neural Information Processing Systems}, 37:\penalty0 76529--76552, 2024.

\bibitem[Hyv\"arinen \& Morioka(2016)Hyv\"arinen and Morioka]{hyvarinen2016unsupervised}
Aapo Hyv\"arinen and Hiroshi Morioka.
\newblock Unsupervised feature extraction by time-contrastive learning and nonlinear ica.
\newblock \emph{Advances in neural information processing systems}, 29, 2016.

\bibitem[Hyv{\"a}rinen et~al.(2023)Hyv{\"a}rinen, Khemakhem, and Morioka]{hyvarinen2023nonlinear}
Aapo Hyv{\"a}rinen, Ilyes Khemakhem, and Hiroshi Morioka.
\newblock Nonlinear independent component analysis for principled disentanglement in unsupervised deep learning.
\newblock \emph{Patterns}, 4\penalty0 (10), 2023.

\bibitem[Hyvärinen et~al.(2019)Hyvärinen, Sasaki, and Turner]{hyvarinen2019nonlinear}
Aapo Hyvärinen, Hiroaki Sasaki, and Richard Turner.
\newblock Nonlinear ica using auxiliary variables and generalized contrastive learning.
\newblock In \emph{The 22nd International Conference on Artificial Intelligence and Statistics}, pp.\  859--868. PMLR, 2019.

\bibitem[Ke et~al.()Ke, Didolkar, Mittal, Goyal, Lajoie, Bauer, Rezende, Bengio, Pal, and Mozer]{ke2systematic}
Nan~Rosemary Ke, Aniket~Rajiv Didolkar, Sarthak Mittal, Anirudh Goyal, Guillaume Lajoie, Stefan Bauer, Danilo~Jimenez Rezende, Yoshua Bengio, Christopher Pal, and Michael~Curtis Mozer.
\newblock Systematic evaluation of causal discovery in visual model based reinforcement learning.
\newblock In \emph{Thirty-fifth Conference on Neural Information Processing Systems Datasets and Benchmarks Track (Round 2)}.

\bibitem[Kearns \& Koller(1999)Kearns and Koller]{kearns1999efficient}
Michael Kearns and Daphne Koller.
\newblock Efficient reinforcement learning in factored mdps.
\newblock In \emph{Proceedings of the 16th international joint conference on Artificial intelligence-Volume 2}, pp.\  740--747, 1999.

\bibitem[Kearns \& Singh(2002)Kearns and Singh]{kearns2002near}
Michael Kearns and Satinder Singh.
\newblock Near-optimal reinforcement learning in polynomial time.
\newblock \emph{Machine learning}, 49:\penalty0 209--232, 2002.

\bibitem[Khemakhem et~al.(2020{\natexlab{a}})Khemakhem, Kingma, Monti, and Hyv\"arinen]{khemakhem2020variational}
Ilyes Khemakhem, Diederik Kingma, Ricardo Monti, and Aapo Hyv\"arinen.
\newblock Variational autoencoders and nonlinear ica: A unifying framework.
\newblock In \emph{International Conference on Artificial Intelligence and Statistics}, pp.\  2207--2217. PMLR, 2020{\natexlab{a}}.

\bibitem[Khemakhem et~al.(2020{\natexlab{b}})Khemakhem, Monti, Kingma, and Hyv\"arinen]{khemakhem2020ice}
Ilyes Khemakhem, Ricardo Monti, Diederik Kingma, and Aapo Hyv\"arinen.
\newblock Ice-beem: Identifiable conditional energy-based deep models based on nonlinear ica.
\newblock \emph{Advances in Neural Information Processing Systems}, 33:\penalty0 12768--12778, 2020{\natexlab{b}}.

\bibitem[Kim \& Mnih(2018)Kim and Mnih]{kim2018disentangling}
Hyunjik Kim and Andriy Mnih.
\newblock Disentangling by factorising.
\newblock In \emph{International conference on machine learning}, pp.\  2649--2658. PMLR, 2018.

\bibitem[Kingma \& Welling(2014)Kingma and Welling]{kingma2014vae}
Diederik~P. Kingma and Max Welling.
\newblock Auto-encoding variational bayes.
\newblock In Yoshua Bengio and Yann LeCun (eds.), \emph{2nd International Conference on Learning Representations, {ICLR} 2014, Banff, AB, Canada, April 14-16, 2014, Conference Track Proceedings}, 2014.

\bibitem[Klindt et~al.(2020)Klindt, Schott, Sharma, Ustyuzhaninov, Brendel, Bethge, and Paiton]{klindt2020towards}
David~A Klindt, Lukas Schott, Yash Sharma, Ivan Ustyuzhaninov, Wieland Brendel, Matthias Bethge, and Dylan Paiton.
\newblock Towards nonlinear disentanglement in natural data with temporal sparse coding.
\newblock In \emph{International Conference on Learning Representations}, 2020.

\bibitem[Koller \& Parr(2000)Koller and Parr]{koller2000policy}
Daphne Koller and Ronald Parr.
\newblock Policy iteration for factored mdps.
\newblock In \emph{Proceedings of the Sixteenth conference on Uncertainty in artificial intelligence}, pp.\  326--334, 2000.

\bibitem[Kuhn(1955)]{kuhn1955hungarian}
Harold~W Kuhn.
\newblock The hungarian method for the assignment problem.
\newblock \emph{Naval research logistics quarterly}, 2\penalty0 (1-2):\penalty0 83--97, 1955.

\bibitem[Lachapelle et~al.(2022)Lachapelle, Rodriguez, Sharma, Everett, Le~Priol, Lacoste, and Lacoste-Julien]{lachapelle2022disentanglement}
S{\'e}bastien Lachapelle, Pau Rodriguez, Yash Sharma, Katie~E Everett, R{\'e}mi Le~Priol, Alexandre Lacoste, and Simon Lacoste-Julien.
\newblock Disentanglement via mechanism sparsity regularization: A new principle for nonlinear ica.
\newblock In \emph{Conference on Causal Learning and Reasoning}, pp.\  428--484. PMLR, 2022.

\bibitem[Lachapelle et~al.(2024)Lachapelle, L{\'o}pez, Sharma, Everett, Priol, Lacoste, and Lacoste-Julien]{lachapelle2024nonparametric}
S{\'e}bastien Lachapelle, Pau~Rodr{\'\i}guez L{\'o}pez, Yash Sharma, Katie Everett, R{\'e}mi~Le Priol, Alexandre Lacoste, and Simon Lacoste-Julien.
\newblock Nonparametric partial disentanglement via mechanism sparsity: Sparse actions, interventions and sparse temporal dependencies.
\newblock \emph{arXiv preprint arXiv:2401.04890}, 2024.

\bibitem[Lei et~al.(2022)Lei, Sch{\"o}lkopf, and Posner]{lei2022variational}
Anson Lei, Bernhard Sch{\"o}lkopf, and Ingmar Posner.
\newblock Variational causal dynamics: Discovering modular world models from interventions.
\newblock \emph{arXiv preprint arXiv:2206.11131}, 2022.

\bibitem[Levine et~al.(2016)Levine, Finn, Darrell, and Abbeel]{levine2016visuomotor}
Sergey Levine, Chelsea Finn, Trevor Darrell, and Pieter Abbeel.
\newblock End-to-end training of deep visuomotor policies.
\newblock \emph{Journal of Machine Learning Research}, 17\penalty0 (39):\penalty0 1--40, 2016.
\newblock URL \url{http://jmlr.org/papers/v17/15-522.html}.

\bibitem[Lillicrap et~al.(2016)Lillicrap, Hunt, Pritzel, Heess, Erez, Tassa, Silver, and Wierstra]{lillicrap2015continuous}
Timothy~P Lillicrap, Jonathan~J Hunt, Alexander Pritzel, Nicolas Heess, Tom Erez, Yuval Tassa, David Silver, and Daan Wierstra.
\newblock Continuous control with deep reinforcement learning.
\newblock \emph{4th International Conference on Learning Representations}, 2016.

\bibitem[Lippe et~al.(2022)Lippe, Magliacane, L{\"o}we, Asano, Cohen, and Gavves]{lippe2022citris}
Phillip Lippe, Sara Magliacane, Sindy L{\"o}we, Yuki~M Asano, Taco Cohen, and Stratis Gavves.
\newblock Citris: Causal identifiability from temporal intervened sequences.
\newblock In \emph{International Conference on Machine Learning}, pp.\  13557--13603. PMLR, 2022.

\bibitem[Lippe et~al.(2023{\natexlab{a}})Lippe, Magliacane, L{\"o}we, Asano, Cohen, and Gavves]{lippe2023biscuit}
Phillip Lippe, Sara Magliacane, Sindy L{\"o}we, Yuki~M Asano, Taco Cohen, and Efstratios Gavves.
\newblock Biscuit: Causal representation learning from binary interactions.
\newblock In \emph{Uncertainty in Artificial Intelligence}, pp.\  1263--1273. PMLR, 2023{\natexlab{a}}.

\bibitem[Lippe et~al.(2023{\natexlab{b}})Lippe, Magliacane, L{\"o}we, Asano, Cohen, and Gavves]{lippe2023causal}
Phillip Lippe, Sara Magliacane, Sindy L{\"o}we, Yuki~M Asano, Taco Cohen, and Efstratios Gavves.
\newblock Causal representation learning for instantaneous and temporal effects in interactive systems.
\newblock In \emph{The Eleventh International Conference on Learning Representations}, 2023{\natexlab{b}}.

\bibitem[Liu et~al.(2023)Liu, Huang, Zhu, Tian, Gong, Yu, and Zhang]{liu2023learning}
Yuren Liu, Biwei Huang, Zhengmao Zhu, Honglong Tian, Mingming Gong, Yang Yu, and Kun Zhang.
\newblock Learning world models with identifiable factorization.
\newblock \emph{Advances in Neural Information Processing Systems}, 36:\penalty0 31831--31864, 2023.

\bibitem[Locatello et~al.(2019)Locatello, Bauer, Lucic, Raetsch, Gelly, Sch{\"o}lkopf, and Bachem]{locatello2019challenging}
Francesco Locatello, Stefan Bauer, Mario Lucic, Gunnar Raetsch, Sylvain Gelly, Bernhard Sch{\"o}lkopf, and Olivier Bachem.
\newblock Challenging common assumptions in the unsupervised learning of disentangled representations.
\newblock In \emph{international conference on machine learning}, pp.\  4114--4124. PMLR, 2019.

\bibitem[Locatello et~al.(2020)Locatello, Poole, R{\"a}tsch, Sch{\"o}lkopf, Bachem, and Tschannen]{locatello2020weakly}
Francesco Locatello, Ben Poole, Gunnar R{\"a}tsch, Bernhard Sch{\"o}lkopf, Olivier Bachem, and Michael Tschannen.
\newblock Weakly-supervised disentanglement without compromises.
\newblock In \emph{International conference on machine learning}, pp.\  6348--6359. PMLR, 2020.

\bibitem[Loshchilov \& Hutter(2019)Loshchilov and Hutter]{loshchilov2018decoupled}
Ilya Loshchilov and Frank Hutter.
\newblock Decoupled weight decay regularization.
\newblock In \emph{International Conference on Learning Representations}, 2019.
\newblock URL \url{https://openreview.net/forum?id=Bkg6RiCqY7}.

\bibitem[Misra et~al.(2021)Misra, Liu, Jin, and Langford]{misra2021provable}
Dipendra Misra, Qinghua Liu, Chi Jin, and John Langford.
\newblock Provable rich observation reinforcement learning with combinatorial latent states.
\newblock In \emph{International Conference on Learning Representations}, 2021.

\bibitem[Mnih et~al.(2015)Mnih, Kavukcuoglu, Silver, Rusu, Veness, Bellemare, Graves, Riedmiller, Fidjeland, Ostrovski, et~al.]{mnih2015human}
Volodymyr Mnih, Koray Kavukcuoglu, David Silver, Andrei~A Rusu, Joel Veness, Marc~G Bellemare, Alex Graves, Martin Riedmiller, Andreas~K Fidjeland, Georg Ostrovski, et~al.
\newblock Human-level control through deep reinforcement learning.
\newblock \emph{nature}, 518\penalty0 (7540):\penalty0 529--533, 2015.

\bibitem[Oord et~al.(2018)Oord, Li, and Vinyals]{oord2018representation}
Aaron van~den Oord, Yazhe Li, and Oriol Vinyals.
\newblock Representation learning with contrastive predictive coding.
\newblock \emph{arXiv preprint arXiv:1807.03748}, 2018.

\bibitem[Osband \& Van~Roy(2014)Osband and Van~Roy]{osband2014near}
Ian Osband and Benjamin Van~Roy.
\newblock Near-optimal reinforcement learning in factored mdps.
\newblock \emph{Advances in Neural Information Processing Systems}, 27, 2014.

\bibitem[Pignatelli et~al.(2024)Pignatelli, Liesen, Lange, Lu, Castro, and Toni]{pignatelli2024navix}
Eduardo Pignatelli, Jarek Liesen, Robert~Tjarko Lange, Chris Lu, Pablo~Samuel Castro, and Laura Toni.
\newblock Navix: Scaling minigrid environments with jax.
\newblock \emph{arXiv preprint arXiv:2407.19396}, 2024.

\bibitem[Pitis et~al.(2020)Pitis, Creager, and Garg]{pitis2020counterfactual}
Silviu Pitis, Elliot Creager, and Animesh Garg.
\newblock Counterfactual data augmentation using locally factored dynamics.
\newblock \emph{Advances in Neural Information Processing Systems}, 33:\penalty0 3976--3990, 2020.

\bibitem[Pitis et~al.(2022)Pitis, Creager, Mandlekar, and Garg]{pitis2022mocoda}
Silviu Pitis, Elliot Creager, Ajay Mandlekar, and Animesh Garg.
\newblock Mocoda: Model-based counterfactual data augmentation.
\newblock \emph{Advances in Neural Information Processing Systems}, 35:\penalty0 18143--18156, 2022.

\bibitem[Puterman(1994)]{puterman1994mdps}
Martin~L. Puterman.
\newblock \emph{Markov Decision Processes: Discrete Stochastic Dynamic Programming}.
\newblock Wiley Series in Probability and Statistics. Wiley, 1994.
\newblock ISBN 978-0-47161977-2.
\newblock \doi{10.1002/9780470316887}.

\bibitem[Rodriguez-Sanchez \& Konidaris(2024)Rodriguez-Sanchez and Konidaris]{rodriguez-sanchez2024learning}
Rafael Rodriguez-Sanchez and George Konidaris.
\newblock Learning abstract world models for value-preserving planning with options.
\newblock \emph{Reinforcement Learning Journal}, 4:\penalty0 1733--1758, 2024.

\bibitem[Schmidhuber(1992)]{schmidhuber1992learning}
J{\"u}rgen Schmidhuber.
\newblock Learning factorial codes by predictability minimization.
\newblock \emph{Neural computation}, 4\penalty0 (6):\penalty0 863--879, 1992.

\bibitem[Sch{\"o}lkopf et~al.(2021)Sch{\"o}lkopf, Locatello, Bauer, Ke, Kalchbrenner, Goyal, and Bengio]{scholkopf2021toward}
Bernhard Sch{\"o}lkopf, Francesco Locatello, Stefan Bauer, Nan~Rosemary Ke, Nal Kalchbrenner, Anirudh Goyal, and Yoshua Bengio.
\newblock Toward causal representation learning.
\newblock \emph{Proceedings of the IEEE}, 109\penalty0 (5):\penalty0 612--634, 2021.

\bibitem[Schrittwieser et~al.(2020)Schrittwieser, Antonoglou, Hubert, Simonyan, Sifre, Schmitt, Guez, Lockhart, Hassabis, Graepel, et~al.]{schrittwieser2020mastering}
Julian Schrittwieser, Ioannis Antonoglou, Thomas Hubert, Karen Simonyan, Laurent Sifre, Simon Schmitt, Arthur Guez, Edward Lockhart, Demis Hassabis, Thore Graepel, et~al.
\newblock Mastering atari, go, chess and shogi by planning with a learned model.
\newblock \emph{Nature}, 588\penalty0 (7839):\penalty0 604--609, 2020.

\bibitem[Seitzer et~al.(2021)Seitzer, Sch{\"o}lkopf, and Martius]{seitzer2021causal}
Maximilian Seitzer, Bernhard Sch{\"o}lkopf, and Georg Martius.
\newblock Causal influence detection for improving efficiency in reinforcement learning.
\newblock \emph{Advances in Neural Information Processing Systems}, 34:\penalty0 22905--22918, 2021.

\bibitem[Squires et~al.(2023)Squires, Seigal, Bhate, and Uhler]{squires2023linear}
Chandler Squires, Anna Seigal, Salil~S Bhate, and Caroline Uhler.
\newblock Linear causal disentanglement via interventions.
\newblock In \emph{International conference on machine learning}, pp.\  32540--32560. PMLR, 2023.

\bibitem[Strehl et~al.(2007)Strehl, Diuk, and Littman]{strehl2007structured}
Alexander~L. Strehl, Carlos Diuk, and Michael~L. Littman.
\newblock Efficient structure learning in factored-state mdps.
\newblock In \emph{Proceedings of the 22nd National Conference on Artificial Intelligence - Volume 1}, AAAI'07, pp.\  645–650. AAAI Press, 2007.
\newblock ISBN 9781577353232.

\bibitem[Sutton et~al.(1999)Sutton, Precup, and Singh]{sutton1999between}
Richard~S Sutton, Doina Precup, and Satinder Singh.
\newblock Between mdps and semi-mdps: A framework for temporal abstraction in reinforcement learning.
\newblock \emph{Artificial intelligence}, 112\penalty0 (1-2):\penalty0 181--211, 1999.

\bibitem[Thomas et~al.(2018)Thomas, Bengio, Fedus, Pondard, Beaudoin, Larochelle, Pineau, Precup, and Bengio]{thomas2018disentangling}
Valentin Thomas, Emmanuel Bengio, William Fedus, Jules Pondard, Philippe Beaudoin, Hugo Larochelle, Joelle Pineau, Doina Precup, and Yoshua Bengio.
\newblock Disentangling the independently controllable factors of variation by interacting with the world.
\newblock \emph{arXiv preprint arXiv:1802.09484}, 2018.

\bibitem[Tian et~al.(2020)Tian, Qian, and Sra]{tian2020towards}
Yi~Tian, Jian Qian, and Suvrit Sra.
\newblock Towards minimax optimal reinforcement learning in factored markov decision processes.
\newblock \emph{Advances in Neural Information Processing Systems}, 33:\penalty0 19896--19907, 2020.

\bibitem[Varici et~al.(2024)Varici, Acart{\"u}rk, Shanmugam, and Tajer]{varici2024general}
Burak Varici, Emre Acart{\"u}rk, Karthikeyan Shanmugam, and Ali Tajer.
\newblock General identifiability and achievability for causal representation learning.
\newblock In \emph{International Conference on Artificial Intelligence and Statistics}, pp.\  2314--2322. PMLR, 2024.

\bibitem[Vigorito \& Barto(2009)Vigorito and Barto]{vigorito2009incremental}
Christopher~M Vigorito and Andrew~G Barto.
\newblock Incremental structure learning in factored mdps with continuous states and actions.
\newblock \emph{University of Massachusetts Amherst-Department of Computer Science, Tech. Rep}, 2009.

\bibitem[Vigorito \& Barto(2010)Vigorito and Barto]{vigorito2010skills}
Christopher~M. Vigorito and Andrew~G. Barto.
\newblock Intrinsically motivated hierarchical skill learning in structured environments.
\newblock \emph{IEEE Transactions on Autonomous Mental Development}, 2\penalty0 (2):\penalty0 132--143, 2010.
\newblock \doi{10.1109/TAMD.2010.2050205}.

\bibitem[von K{\"u}gelgen et~al.(2024)von K{\"u}gelgen, Besserve, Wendong, Gresele, Keki{\'c}, Bareinboim, Blei, and Sch{\"o}lkopf]{von2024nonparametric}
Julius von K{\"u}gelgen, Michel Besserve, Liang Wendong, Luigi Gresele, Armin Keki{\'c}, Elias Bareinboim, David Blei, and Bernhard Sch{\"o}lkopf.
\newblock Nonparametric identifiability of causal representations from unknown interventions.
\newblock \emph{Advances in Neural Information Processing Systems}, 36, 2024.

\bibitem[Wang et~al.(2022{\natexlab{a}})Wang, Du, Torralba, Isola, Zhang, and Tian]{wang2022denoised}
Tongzhou Wang, Simon~S Du, Antonio Torralba, Phillip Isola, Amy Zhang, and Yuandong Tian.
\newblock Denoised mdps: Learning world models better than the world itself.
\newblock \emph{arXiv preprint arXiv:2206.15477}, 2022{\natexlab{a}}.

\bibitem[Wang et~al.(2022{\natexlab{b}})Wang, Xiao, Xu, Zhu, and Stone]{wang2022causal}
Zizhao Wang, Xuesu Xiao, Zifan Xu, Yuke Zhu, and Peter Stone.
\newblock Causal dynamics learning for task-independent state abstraction.
\newblock In Kamalika Chaudhuri, Stefanie Jegelka, Le~Song, Csaba Szepesvari, Gang Niu, and Sivan Sabato (eds.), \emph{Proceedings of the 39th International Conference on Machine Learning}, volume 162 of \emph{Proceedings of Machine Learning Research}, pp.\  23151--23180. PMLR, 17--23 Jul 2022{\natexlab{b}}.

\bibitem[Wang et~al.(2023)Wang, Hu, Stone, and Mart{\'\i}n-Mart{\'\i}n]{wang2023elden}
Zizhao Wang, Jiaheng Hu, Peter Stone, and Roberto Mart{\'\i}n-Mart{\'\i}n.
\newblock Elden: Exploration via local dependencies.
\newblock \emph{Advances in Neural Information Processing Systems}, 36:\penalty0 15456--15474, 2023.

\bibitem[Wang et~al.(2024)Wang, Hu, Chuck, Chen, Mart{\'{\i}}n{-}Mart{\'{\i}}n, Zhang, Niekum, and Stone]{wang2024skild}
Zizhao Wang, Jiaheng Hu, Caleb Chuck, Stephen Chen, Roberto Mart{\'{\i}}n{-}Mart{\'{\i}}n, Amy Zhang, Scott Niekum, and Peter Stone.
\newblock Skild: Unsupervised skill discovery guided by factor interactions.
\newblock In Amir Globersons, Lester Mackey, Danielle Belgrave, Angela Fan, Ulrich Paquet, Jakub~M. Tomczak, and Cheng Zhang (eds.), \emph{Advances in Neural Information Processing Systems 38: Annual Conference on Neural Information Processing Systems 2024, NeurIPS 2024, Vancouver, BC, Canada, December 10 - 15, 2024}, 2024.

\bibitem[Xu \& Tewari(2020)Xu and Tewari]{xu2020near}
Ziping Xu and Ambuj Tewari.
\newblock Near-optimal reinforcement learning in factored mdps: Oracle-efficient algorithms for the non-episodic setting.
\newblock \emph{Advances in Neural Information Processing Systems}, 33, 2020.

\bibitem[Zhang et~al.(2023)Zhang, Greenewald, Squires, Srivastava, Shanmugam, and Uhler]{zhang2023identifiability}
Jiaqi Zhang, Kristjan Greenewald, Chandler Squires, Akash Srivastava, Karthikeyan Shanmugam, and Caroline Uhler.
\newblock Identifiability guarantees for causal disentanglement from soft interventions.
\newblock \emph{Advances in Neural Information Processing Systems}, 36:\penalty0 50254--50292, 2023.

\end{thebibliography}
\bibliographystyle{iclr2026_conference}


\newpage
\appendix

\section{Theoretical Results} \label{appendix:proofs}

\begin{definition}[Identifiability]
An encoder $f : X \rightarrow Z$ identifies $\mathcal{S}$ if there exists a permutation $\pi$ and functions $h_i : \mathcal{S}_i \rightarrow Z_{\pi(i)}$ that are diffeomorphism such that $s_i = h(z_{\pi(i)})$. 
That is, each latent factor learned is equivalent to a ground truth factor up to a permutation.
\end{definition}

\begin{lemma}[\textit{Local} Identifiability of the Independently Controllable Factors]
    Let the learned encoder $f: X\rightarrow Z$ be a diffeomorphism.
    If the following conditions hold
    \begin{enumerate}
        \item The action effects are \textbf{sufficiently different} from the natural dynamics. That is, there exists $i\in[K]$
        $$
            \frac{\partial}{\partial s_i'}\frac{T_i(s_i'\mid s, a)}{T(s_i'\mid s, a_0)} \neq 0
        $$
        for $s\in \tilde{S} \subseteq \mathcal{S}$, almost surely. Moreover, there exists at least an action that affects each $s_i$ (independently controllability)
        \item All energy function approximate the factor forward dynamics $E(z_i', a, z) \propto \log T(z_i'\mid z, a)$;
        \item (\textbf{Sparsity}) The learned score differences $$\frac{\partial}{\partial z_i'}\Delta E_i^a = \frac{\partial}{\partial z_i'} \left[E(z_i', a, z) - E(z_i', a_0, z)\right] \neq 0$$ for at most one variable $j$.
    \end{enumerate}
    then, there exists a factor-wise diffeomorphism $h : \mathcal{S} \rightarrow Z$ between the learned encoding $Z$ and the underlying ground truth factors of variation $\mathcal{S}$.
\end{lemma}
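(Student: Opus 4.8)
The plan is to work with the composite diffeomorphism $g := f \circ o : \mathcal{S} \to Z$ and to show that its Jacobian $J_g$ carries, at every point where the hypotheses hold, the sparsity pattern of a generalized permutation matrix; a map whose Jacobian is structured this way is factor-wise. The engine of the argument is the invariance of the log-ratio under the unknown mixing, which I would establish first. Because both $o$ and $f$ are diffeomorphisms, the volume (Jacobian-determinant) factors in Equation~\eqref{eq:obs_dynamics} cancel in the ratio, so under Condition~2 the learned latent score difference reproduces the true ratio,
\[
\sum_{i} \Delta E_i^a(z',z) \;=\; \log r_a(z',z) \;=\; \log r_a(s',s),
\]
and the right-hand side equals $\log\frac{T(s'_{j(a)}\mid s,a)}{T(s'_{j(a)}\mid s,a_0)}$, a function of the single affected coordinate $s'_{j(a)}$ alone, where $j(a)$ is the unique factor in $\Psi(s,a)$.

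Next I would differentiate this identity in $s'$ and read off a support constraint. Writing $L^a(s') := \log r_a(s',s)$ and $\tilde L^a(z') := \sum_i \Delta E_i^a(z',z)$, so that $L^a = \tilde L^a \circ g$, the chain rule gives $\nabla_{s'} L^a = J_g^\top\, \nabla_{z'}\tilde L^a$. By the factored energy parameterization, $(\nabla_{z'}\tilde L^a)_i = \tfrac{\partial}{\partial z'_i}\Delta E_i^a$, which by the sparsity Condition~3 is nonzero for at most one index $m(a)$; and since $\nabla_{s'}L^a \neq 0$ by Condition~1, the vector $\nabla_{z'}\tilde L^a$ cannot vanish, so exactly one coordinate $m(a)$ survives. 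The identity then reads $(\nabla_{z'}\tilde L^a)_{m(a)}\cdot(\text{row }m(a)\text{ of }J_g) = \nabla_{s'}L^a$, whose support is exactly $\{j(a)\}$ (Condition~1 gives nonvanishing in coordinate $j(a)$, and $L^a$ depends on no other $s'_k$). Dividing by the nonzero scalar $(\nabla_{z'}\tilde L^a)_{m(a)}$, I conclude that row $m(a)$ of $J_g$ is supported on the single column $j(a)$, nonzero there.

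I would then assemble these per-action facts into a global permutation pattern. Independent controllability (Condition~1) makes $a \mapsto j(a)$ onto $[K]$, so choosing one controlling action per factor yields a row of $J_g$ supported on each column. Distinct factors force distinct rows: if two actions with $j(a)\neq j(a')$ shared a latent index $m=m(a)=m(a')$, then row $m$ would vanish off both $\{j(a)\}$ and $\{j(a')\}$, hence vanish entirely, contradicting invertibility of $J_g$ (it is the Jacobian of a diffeomorphism). Thus $J_g$ has exactly one nonzero per row and, by invertibility, per column; it is a generalized permutation matrix realizing a permutation $\rho$. The off-diagonal zeros $\partial z'_{m}/\partial s'_k = 0$ for $k\neq\rho(m)$ persist on the open set where the hypotheses hold, so $\rho$ is locally constant and each $z_m$ is locally a diffeomorphic function $h_m$ of $s_{\rho(m)}$ alone, giving the claimed factor-wise diffeomorphism (promotion to a single global $\rho$ uses connectedness of $\mathcal{S}$, which is precisely the extra hypothesis of Theorem~\ref{th:identifiability}).

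The main obstacle I expect is transferring support from $s'$-space to $z'$-space through the generally non-diagonal $J_g$: the clean conclusion requires that the surviving latent coordinate $m(a)$ from Condition~3 be \emph{genuinely} nonzero rather than merely ``at most one,'' which I secure only by coupling Condition~3 with the nonvanishing in Condition~1 and ruling out cancellations in $J_g^\top \nabla_{z'}\tilde L^a$. A secondary subtlety is the ``almost surely'' qualifier over the restricted region $\tilde S$: I must argue the support pattern holds on a set of full measure and then extends by continuity of $J_g$, and that $\rho$ does not jump across the domain—exactly the point where connectedness (the Theorem hypothesis, beyond this local lemma) becomes essential.
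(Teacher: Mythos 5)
Your proposal is correct and follows essentially the same route as the paper: compose $f\circ o$ into a single diffeomorphism, use the optimal classifier identity to equate the true log-ratio with the summed energy differences, apply the chain rule $\nabla_{s'}L^a = J^\top \nabla_{z'}\tilde L^a$, and combine the sufficiency and sparsity conditions to force each relevant row of the Jacobian to be supported on a single column, yielding a generalized permutation matrix. The only cosmetic difference is that you argue action-by-action where the paper stacks the gradients into matrices $\Delta S(s'\mid s) = J_h^\top \Delta S(z'\mid z)$ and reasons about rank; your deferral of the global constancy of the permutation to connectedness matches the paper's separate global-identifiability proposition.
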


\begin{proof}

Let $h$ be a diffeomorphism between $\mathcal{S}$ and $Z$. 
Given that the ground truth observation function is a diffeomorphism $o: \mathcal{S}\rightarrow X$ and, by assumption, the learned encoding is also a diffeomorphism. We can see that $h(s) = f(o(s))$.
We need to prove that there exists a permutation $\pi : [K]\rightarrow [K]$ such that the permuted Jacobian of $h$, $P_\pi J_h$, is a diagonal matrix.

Given that $h$ is a diffeomorphism, $J_h$ exists. 

Moreover, for each binary classifier, we know that at an optimum they converge to:

\begin{equation}
    \log \frac{T(s'\mid s,a_n)}{T(s'\mid s,a_0)} = \sum_{l=1}^K E(z_l', a_n, z) - E(z_l', a_0, z) + C(z,a_n) \;\; \forall a_n\in A\setminus\{a_0\};
\end{equation}

where $C(z,a)$ is a constant resulting from the normalization constants that are not estimated in ACF. 
By taking the gradient with respect to $s'$ using the chain rule, we get that

\begin{equation}\label{eq:scores}
    \nabla_{s'} \log \frac{T(s'\mid s,a_n)}{T(s'\mid s,a_0)} = \sum_{l=1}^K J_h^T (s') \nabla_{z'}\left[E(z_l', a_n, z) - E(z_l', a_0, z)\right] \;\; \forall a_n\in A\setminus\{a_0\}
\end{equation}

Let $\rho(a_i) \mapsto [K]$ be the maps each action to its affected factor. Hence, by considering our sparse interaction model in Equation  \ref{eq:transition} we get that each classifier is a function of the variables affected by $a_n$.
That is, 

\begin{align}
    \nabla_{s'} \log \frac{T(s_{\rho(a_n)}'\mid s,a_n)}{T(s_{\rho(a_n)}'\mid s,a_0)} &= \sum_{l=1}^K J_h^T (s') \nabla_{z'}\left[E(z_l', a_n, z) - E(z_l', a_0, z)\right] \;\; \forall a_n\in A\setminus\{a_0\}\\
    &= J_h^T \left[
    \begin{matrix}
    \frac{\partial}{\partial z_1'} (E(z_1', a_n, z) - E(z_1', a_0, z)) \\
    \vdots\\
    \frac{\partial}{\partial z_K'} (E(z_K', a_n, z) - E(z_K', a_0, z))
    \end{matrix}
\right]
\end{align}

Moreover, consider a set of the actions $\bar{\mathcal{A}}\subseteq \mathcal{A}\setminus \{a_0\}$ such that each action affects one of the ground truth variables, that is, they are independently controllable. 

We can write the above conditions in matrix form by stacking the gradients of all the actions in $\bar{\mathcal{A}}$.

Let $\Delta S (z'\mid z)$ be the matrices of learned score differences
\begin{equation}
    [\Delta S(z'\mid z)]_{l,n} = \frac{\partial}{\partial z_l'}\left[E(z_l', a_n, z) - E(z_l', a_0, z)\right];
\end{equation}

and $\Delta S(s'\mid s)$ be the matrices of score differences in $s'$.

\begin{align}
    [\Delta S(s'\mid s)]_{i,n} = \begin{cases}
        \frac{\partial}{\partial{s_i'}} \log\frac{T(s_i'\mid s,a_n)}{T(s_i'\mid s,a_0)} & \text{if } i = \rho(a_n) \\
        0 & \text{otherwise}
    \end{cases}
\end{align}

Hence, we can rewrite Equation \ref{eq:scores} as

\begin{equation}
    \Delta S(s'\mid s) = J_h^T(s') \Delta S(z'\mid z).
\end{equation}

There exists $s$ such that all columns of $\Delta S(s'\mid s)$ has only one element different from zero and it is full rank because each factor is affected by at least one action. 
Moreover, given $J_h(s')$ is full rank because is a diffeomorphism and $\Delta S(z'\mid z)$ must also have exactly one element different from zero (sparsity condition).

Thus, $J_h(s')$ must have only one element different from zero per row. 
To see this, consider the $j$th column of $\Delta S(z'\mid z)_j = \beta e_r$ where $\beta \in \mathbb{R}$ and $r$ is the row different from zero. Hence,

\begin{align*}
    \Delta S(s'\mid s)_j &= J_h^T(s')  \Delta S(z'\mid z)_j;\\ &= \beta J_h^T(s') e_r;\\
    &= \beta J_h^T(s')_{:, r};
\end{align*}

and, therefore, $J_h(s')_r$ the $r$th column of the Jacobian must have one element different from zero. Therefore, $J_h(s')$ must be $1$-sparse.

Finally, there exists a permutation $P(s')$ such that $J_h(s') = P(s')D(s')$ where $D(s')$ is a diagonal matrix. That is, there exists $h$ that is a factorwise transformation of $s$ up to a permutation.  

\end{proof}

This lemma shows that the there exists a permutation in $s$ where action can have independent control over the factors. However, this does not guarantee the encoding is consistent because the permutation could be different in other parts of the space.

The following proposition establishes some conditions that guarantee identifiability globally.

\begin{proposition}[Global Identifiability of Independently Controllable Factors]

Let the local conditions for identifiability hold. Moreover, assume that $\mathcal{S} \subset \mathbb{R}^K$ is connected. Then there exists a unique permutation $\pi$ for all $s\in \mathcal{S}$. 

\end{proposition}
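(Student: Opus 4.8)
The plan is to reduce global consistency to a continuity-plus-connectedness argument. The local lemma already supplies, at each admissible point $s$, a factorization $J_h(s') = P(s')D(s')$ with $P(s')$ a permutation matrix and $D(s')$ diagonal and invertible (its diagonal entries are nonzero because $h$ is a diffeomorphism, so $J_h$ is everywhere nonsingular). This determines a pointwise permutation $\sigma_s \in S_K$, namely the map sending each row $i$ to the unique column $\sigma_s(i)$ carrying the nonzero entry of $J_h(s')$. The whole claim is that $s \mapsto \sigma_s$ is constant, and I would obtain this by showing it is locally constant and then invoking connectedness of $\mathcal{S}$.

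First I would record that $h = f\circ o$ is a composition of diffeomorphisms, hence $C^1$, so the map $s \mapsto J_h(s')$ is continuous into $\mathbb{R}^{K\times K}$. The key observation is that being nonzero is an open condition: if $[J_h(s_0')]_{i,\sigma_{s_0}(i)}\neq 0$ for every $i$, then there is a neighborhood $U$ of $s_0$ on which all $K$ of these entries remain nonzero. On $U$, however, the local lemma still forces each row of $J_h$ to contain at most one nonzero entry; since row $i$ already has a nonzero entry in column $\sigma_{s_0}(i)$, that must be its only nonzero entry, so $\sigma_s = \sigma_{s_0}$ throughout $U$. Hence $s \mapsto \sigma_s$ is locally constant.

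Next I would finish with the standard topological step. For each fixed permutation $\tau \in S_K$, the level set $\Omega_\tau = \{s : \sigma_s = \tau\}$ is open by local constancy, the $\Omega_\tau$ are pairwise disjoint, and their union is $\mathcal{S}$ (every admissible point lands in exactly one of them). Because $\mathcal{S}$ is connected, a partition into disjoint open sets must have a single nonempty block, so there is exactly one $\tau$ with $\Omega_\tau = \mathcal{S}$; this $\tau$ is the unique global permutation $\pi$, and combined with the diagonal, invertible $D$ it yields the factor-wise diffeomorphism uniformly over $\mathcal{S}$.

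The hard part will be the measure-theoretic fine print hidden in Condition~1, which only guarantees the sufficiently-different inequality---and therefore the $1$-sparse generalized-permutation structure---on a full-measure subset $\tilde S$, not on all of $\mathcal{S}$. The clean argument above runs verbatim on the open set $\Omega$ where the structure holds and shows $\sigma_s$ is constant on each connected component of $\Omega$; the remaining issue is that the exceptional set $\mathcal{S}\setminus\Omega$ could in principle disconnect $\mathcal{S}$ and allow the permutation to differ across components. I would close this gap by arguing that $\Omega$ is open and dense (nonvanishing of the score difference is an open, generic condition) and that the continuity of $J_h$ prevents the support pattern from jumping across a lower-dimensional exceptional set, so local constancy propagates across the closure of $\tilde S$; making this last propagation fully rigorous---rather than merely appealing to density---is the step I expect to require the most care, and it may need an extra regularity or path-connectedness assumption on the admissible region.
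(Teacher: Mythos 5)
Your proof follows essentially the same route as the paper's: both establish that the pointwise permutation is locally constant via continuity of $J_h$ (the designated nonzero entries persist on a neighborhood, and the sparsity of each row then pins down the rest), and both conclude by combining connectedness of $\mathcal{S}$ with the discreteness of the set of permutations. Your version is in fact slightly more careful at one step --- the paper attributes the persistence of the \emph{zero} entries to continuity alone, whereas you correctly derive it from the one-nonzero-entry-per-row constraint of the local lemma --- and the measure-zero caveat you flag about the admissible set $\tilde S$ is a genuine subtlety that the paper's own proof also leaves unaddressed.
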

\begin{proof}
    Let $s_0 \in \mathcal{S}$ be a fix point. 
    We know that the Jacobian $J_h(s_0) = P_\pi(s_0) D(s_0)$ because of local identifiability. Let $\pi_{s_0}$ be the permutation corresponding to the matrix $P_\pi(s_0)$.

    Moreover, because $h$ is a diffeomorphism, we have that each derivative $h_i'(s)$ is continuous and non-vanishing.

    Therefore, there exists a neighborhood $U$ such that $h_{\pi_{s_0}(i)}(s_{\pi_{s_0}(i)}) \neq 0$ for all $s\in U$.

    Because of continuity of $J_h$, we must have that per each row it's nonzero element remains so and, similarly, for the zero elements of the row.
    Therefore, the permutation $\pi_{s_0} = \pi_s$ for all $s\in U$. This makes the permutation locally constant.

    Finally, because there's a finite discrete number of permutations and $\mathcal{S}$ is connected, it implies that $\pi_s = \pi$ globally constant in $\mathcal{S}$.
    
\end{proof}

\section{Extended Empirical Results} \label{appendix:exps}

Anonymized code at \url{https://anonymous.4open.science/r/factored_rl-EE0A}.

\subsection{Network Architectures}
All networks were implemented using JAX \citep{jax2018github} and Flax NNX \citep{flax2020github}.
\begin{table}[ht]
    \centering
    \small
    \caption{All methods use the same Residual Convolutional architecture for encoder (and decoder, when required). All MLPs use SiLU activations. Latent encodings are Tanh to keep between $(-1, 1)$ except for DMS.}
    \label{tab:architecture}
    \begin{tabular}{lcccc}
  \toprule
  \textbf{Component}             & \textbf{DoorKey(8x8)}   & \textbf{Taxi}    & \textbf{Grid2D}   & \textbf{FourRooms}    \\
  \midrule
  latent\_dim $(d)$           & 7              & 6       & 2             & 5            \\
  n\_actions $n_a$           & 10             & 6       & 5             & 10           \\
  \midrule
  ACF Energy[$\times d$]
                        & \multicolumn{4}{c}{$(d+n_a)\to256\to n_a$}\\
  \cmidrule(lr){2-5}
  ACF Inverse           & \multicolumn{4}{c}{$2d\to128\to n_a$} \\
  \cmidrule(lr){2-5}
  ACF Policy            & \multicolumn{4}{c}{$d\to256\to256\to n_a$} \\
  \midrule
  GCL Energy[$\times d$]           & \multicolumn{4}{c}{$(d+n_a)\to128\to128\to n_a$} \\
  \midrule
  DMS Transition[$\times d$]
                        & \multicolumn{4}{c}{$(d+n_a)\to256\to1$} \\
  \midrule
  Markov Inverse        & \multicolumn{4}{c}{$2d\to128\to n_a$} \\
  Markov Ratio          & \multicolumn{4}{c}{$2d\to128\to1$} \\
  \bottomrule
\end{tabular}
\end{table}


\textbf{Pixel-level Encoder \& Decoder.}

We parameterize the residual blocks by doubling the depth of the output feature map until reaching a minimum resolution (\texttt{min\_res}) ($4$ for all our experiments) starting from a minimum depth ($24$ for all our experiments). This is similar to the residual CNN used in \cite{hafner2025mastering}. Table \ref{tab:architecture} show the details of the MLPs used.

\begin{itemize}
  \item \textbf{Residual Encoder}: 
    \begin{itemize}
      \item Positional Embeddings (\(x,y\) channels).
      \item Cascade of downsampling ResidualBlocks:
        stride-2 \(3\times3\) convolution \(\to\) RMSNorm \(\to\) SiLU,
        plus two \(1\times1\) conv residual layers. 
      \item Flatten \(\to\) 2-layer MLP (256→256, SiLU, then Tanh) \(\to\) \texttt{latent\_dim}($d$)
    \end{itemize}
  \item \textbf{Residual Decoder}:
    \begin{itemize}
      \item MLP up‐projection from \texttt{latent\_dim} \(\to\) \(\mathrm{min\_res}\times\mathrm{min\_res}\times D\).
      \item Stack of transposed ResidualBlocks (stride-2 conv\(^T\), RMSNorm, SiLU, …)
      \item Central crop to \(32\times32\) \(\to\) Tanh activation.
    \end{itemize}
    \item \textbf{MLPs} All MLPs have SiLU activations \citep{hendrycks2016gaussian}. 

\end{itemize}

\subsection{Domains}
\textbf{Grid2D}
\begin{description}
    \item[Actions] No-op, Up, Down, Left, Right;
    \item[State Space] Continuous 2D space
    \item[Observations] $32\times 32 \times 3$ pixel rendering. 
\end{description}

\textbf{Taxi implementation in JAX.}
\begin{description}
    \item[Actions] No-op, Up, Down, Left, Right;
    \item[State Space] $5 \times 5$, $1$ passenger, $4$ different goals positions;
    \item[Observations] $32\times 32$ RGB rendering. 
\end{description}

\textbf{Minigrid-FourRooms \& Minigrid-DoorKey(8x8)}
\begin{description}
    \item[Actions] No-op, Rotate clockwise, Rotate counterclockwise, Forward, Backward, Right, Left, Pickup, Open, Done;
    \item[State Space] $16\times16$ grid (position) and orientation (North, South, East, West);
    \item[Observation] RGB $32\times32$ rendering.
\end{description}

\subsection{Hyperparameters, Tuning and Computational Resources}

We tune the hyperparameters by random search allocating  $50$ samples to each method and each configuration run with $5$ different seeds. Tables \ref{tab:acf-hyperparams}, \ref{tab:gcl-hyperparams}, \ref{tab:dms-hyperparams}, and \ref{tab:markov-hyperparams} show the details for the best results for all methods and domains. 
Each trial was run in a NVIDIA GeForce RTX3090 24GB. Each trial took 15min. All experiments used $150K$ transitions.

\begin{table}[ht]
  \centering
  \small
  \caption{Hyperparameters and coefficient weights for the ACF baseline across four domains.}
  \label{tab:acf-hyperparams}
  \begin{tabular}{lcccc}
    \toprule
    \textbf{Hyperparameter} & \textbf{DoorKey–Uniform} & \textbf{Taxi} & \textbf{Grid2D} & \textbf{FourRooms} \\
    \midrule
    \multicolumn{5}{l}{\textit{Training}} \\
    \quad batch\_size             & 128             & 128              & 128             & 128   \\
    \quad lr                      & $4.0966\times10^{-4}$ & $2.9126\times10^{-4}$ & $2.27497\times10^{-4}$ & $3.6392\times10^{-4}$ \\
    \quad epochs                  & 100             & 200              & 200             & 200   \\
    \midrule
    \multicolumn{5}{l}{\textit{ACF Coefficients}} \\
    \quad $\lambda_{\text{fwd}}$          & 97.815          & 31.444           & 95.395          & 40.736  \\
    \quad   $\lambda_r$      & 25.623          & 5.018            & 48.560          & 16.963  \\
    \quad $\lambda_{\text{inv}}$    & 22.094          & 1.000            & 1.000           & 97.365  \\
    \quad $\lambda_{\pi}$            & 1.610           & 9.916            & 1.332           & 22.764 \\

    \bottomrule
  \end{tabular}
\end{table}

\begin{table}[ht]
  \centering
  \small
  \caption{Hyperparameters and coefficient weights for the GCL baseline across domains.}
  \label{tab:gcl-hyperparams}
  \begin{tabular}{lcccc}
    \toprule
    \textbf{Hyperparameter} & \textbf{DoorKey(8x8)} & \textbf{Taxi} & \textbf{Grid2D} & \textbf{FourRooms} \\
    \midrule
    \multicolumn{5}{l}{\textit{Training}} \\
    \quad batch\_size            & 128             & 128              & 128             & 128    \\
    \quad lr                     & $2.0363\times10^{-4}$ & $4.4530\times10^{-4}$ & $1.6031\times10^{-4}$ & $2.7661\times10^{-5}$ \\
    \quad epochs                 & 100             & 200              & 200             & 200    \\
    \midrule
    \multicolumn{5}{l}{\textit{GCL Coefficients}} \\
    \quad classifier\_coeff         & 60.444          & 27.293           & 90.737          & 51.030  \\
    \quad recons\_coeff          & $9.26\times10^{-8}$  & $4.53\times10^{-10}$ & 0.193           & $3.50\times10^{-4}$ \\
    \bottomrule
  \end{tabular}
\end{table}

\begin{table}[ht]
  \centering
  \small
  \caption{Hyperparameters and coefficient weights for the DMS baseline across domains.}
  \label{tab:dms-hyperparams}
  \begin{tabular}{lcccc}
    \toprule
    \textbf{Hyperparameter / Coefficient} & \textbf{DoorKey(8×8)} & \textbf{Taxi} & \textbf{Grid2D} & \textbf{FourRooms} \\
    \midrule
    \multicolumn{5}{l}{\textit{Training}} \\
    \quad batch\_size            & 128             & 128              & 128             & 128    \\
    \quad lr                     & $3.5332\times10^{-4}$ & $9.6832\times10^{-5}$ & $7.5007\times10^{-5}$ & $4.0218\times10^{-4}$ \\
    \quad epochs                 & 100             & 200              & 200             & 200    \\
    \midrule
    \multicolumn{5}{l}{\textit{DMS Coefficients}} \\
    \quad elbo\_const            & 4.737           & 67.349           & 42.948          & 5.225   \\
    \quad action\_sparsity\_const       & 3.536           & 36.881           & 91.982          & 9.878   \\
    \quad state\_sparsity\_const         & 2.557           & 8.024            & 1.000           & 6.091   \\
    \quad gumbel\_temp           & 7.417           & 1.000            & 6.360           & 3.465   \\
    \quad l2\_reg\_const         & 0.0015          & 0.0023           & 0.2805          & 0.0060  \\
    \bottomrule
  \end{tabular}
\end{table}

\begin{table}[ht]
  \centering
  \small
  \caption{Hyperparameters and coefficient weights for the Markov baseline across domains.}
  \label{tab:markov-hyperparams}
  \begin{tabular}{lcccc}
    \toprule
    \textbf{Hyperparameter / Coefficient} & \textbf{DoorKey(8×8)} & \textbf{Taxi} & \textbf{Grid2D} & \textbf{FourRooms} \\
    \midrule
    \multicolumn{5}{l}{\textit{Training}} \\
    \quad batch\_size            & 128             & 128              & 128             & 128    \\
    \quad lr                     & $4.5536\times10^{-4}$ & $3.9622\times10^{-4}$ & $4.2654\times10^{-4}$ & $1.5854\times10^{-4}$ \\
    \quad epochs                 & 100             & 200              & 200             & 200    \\
    \midrule
    \multicolumn{5}{l}{\textit{Markov Coefficients}} \\
    \quad inverse\_const         & 6.009           & 66.916           & 78.480          & 9.472   \\
    \quad ratio\_const           & 8.519           & 42.518           & 1.000           & 0.311   \\
    \quad smoothness\_const      & 2.092           & 8.234            & 84.905          & 9.691   \\
    \bottomrule
  \end{tabular}
\end{table}


\subsection{Detailed Empirical Results}

Detailed results for DoorKey (Figure \ref{fig:doorkey-full}), Minigrid-FourRooms (Figure \ref{fig:4rooms-full}), Taxi (Figure \ref{fig:taxi-full}), and Grid2D (Figure \ref{fig:grid2d-full}).
\begin{figure}[h]
  \centering
  \begin{subfigure}[b]{\linewidth}
    \centering
    \includegraphics[width=\linewidth]{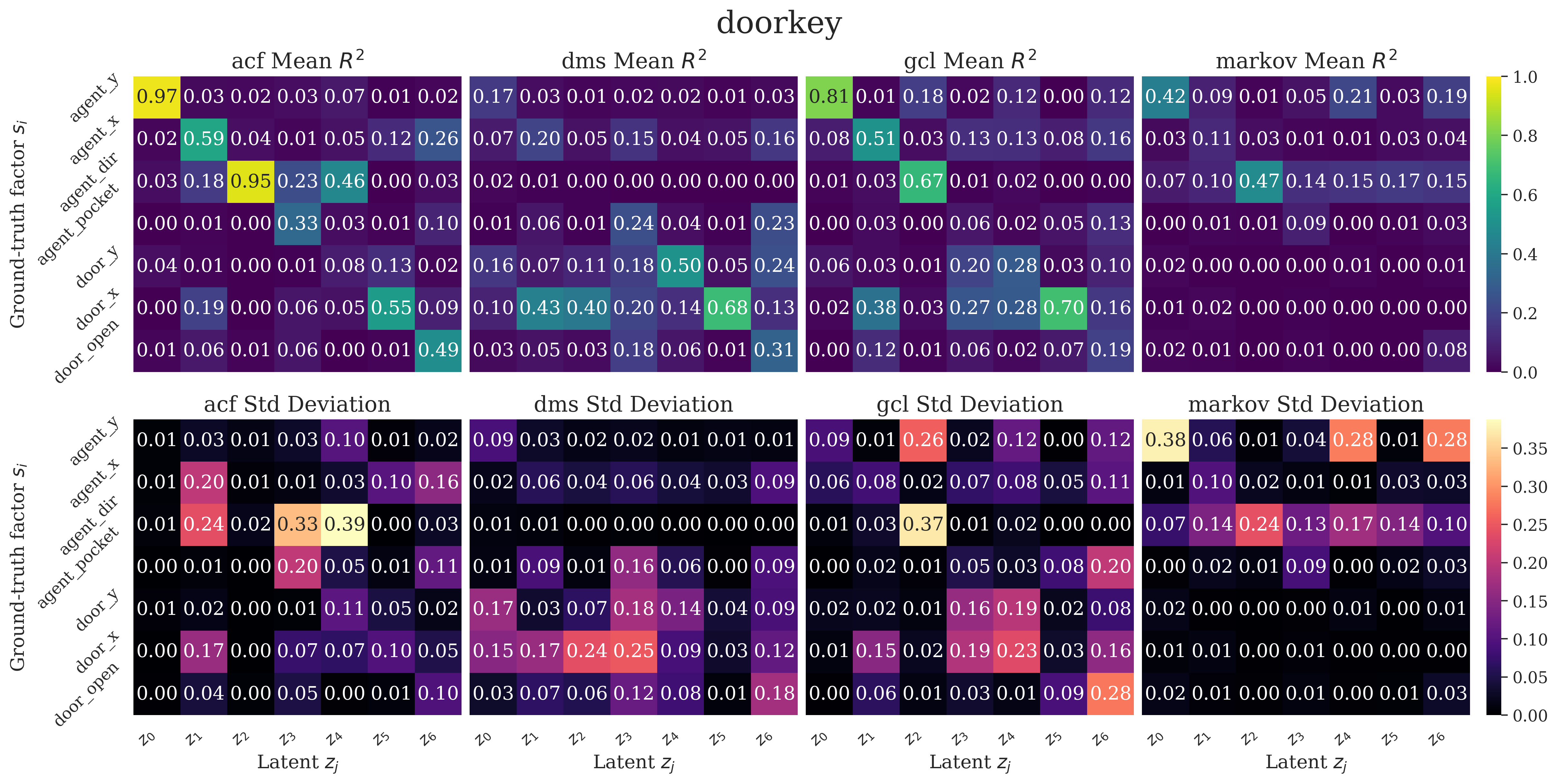}
    \caption{$R^2$ matrices for \textbf{DoorKey (8x8)} for $5$ seeds: [Top] Mean $R^2$ matrices. [Bottom] Standard Deviation}

  \end{subfigure}
  \hfill
  \begin{subfigure}[b]{\linewidth}
    \centering
    \includegraphics[trim=0 0 0 1.1cm, clip, width=\linewidth]{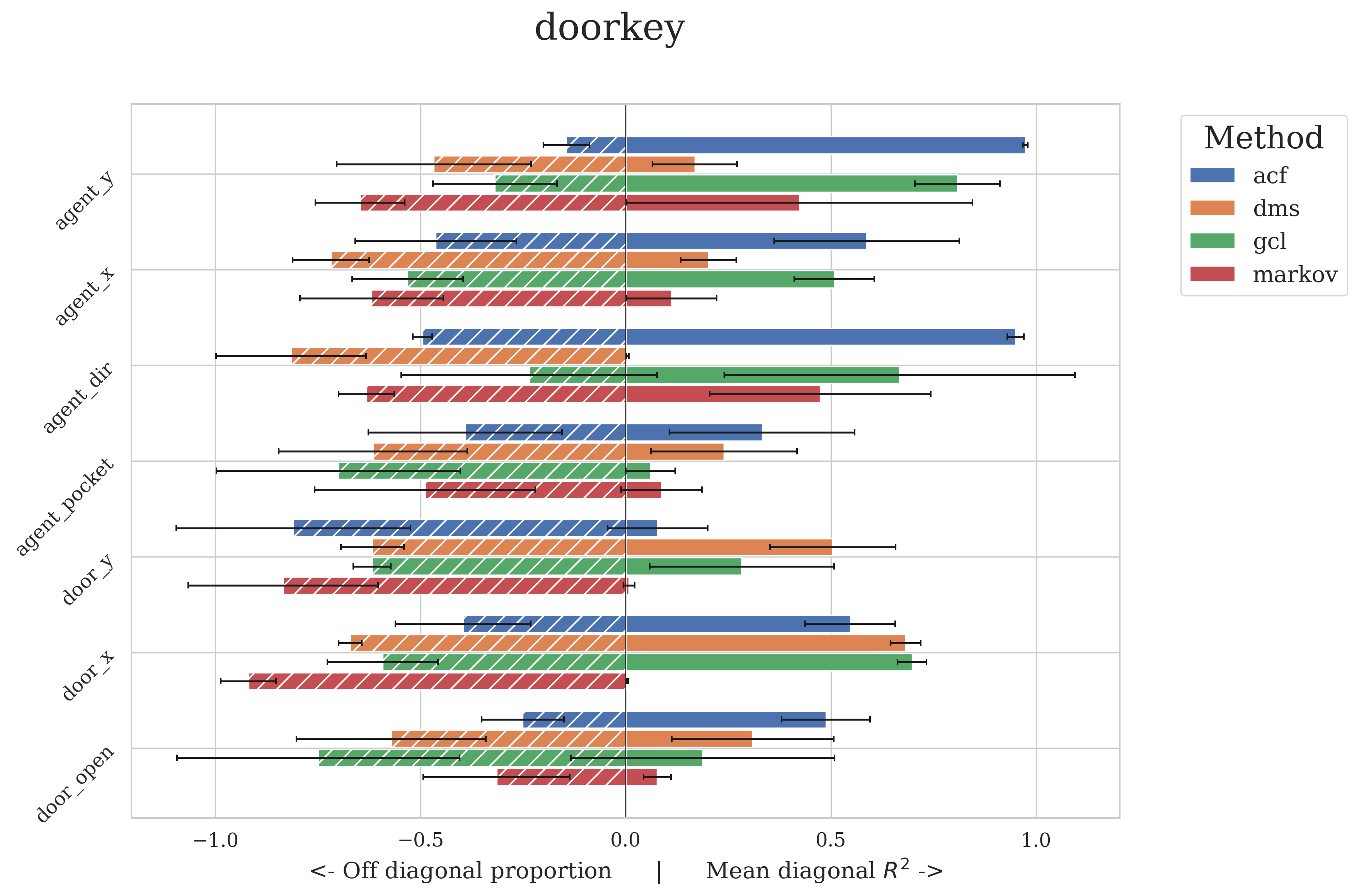}
    \caption{Off diagonal proportion vs. Mean diagonal value per state}
  \end{subfigure}
  \label{fig:doorkey-full}
  \caption{\textbf{DoorKey} Factorization Results}

\end{figure}
\begin{figure}[h]
  \centering
  \begin{subfigure}[b]{\linewidth}
    \centering
    \includegraphics[width=\linewidth]{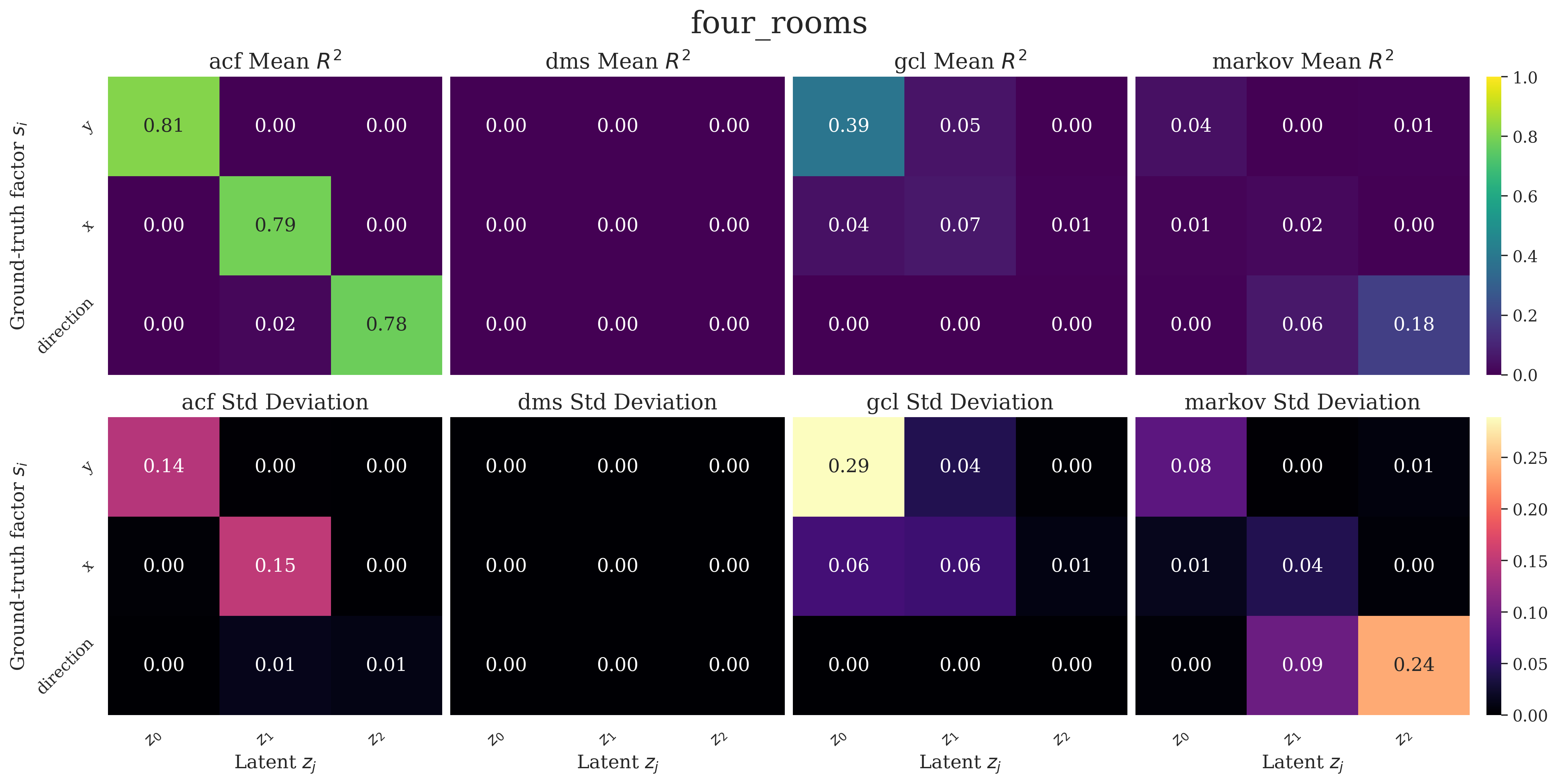}
    \caption{$R^2$ matrices for \textbf{Minigrid-FourRooms} for $5$ seeds: [Top] Mean $R^2$ matrices. [Bottom] Standard Deviation}
    \label{fig:doorkey_results}
  \end{subfigure}
  \hfill
  \begin{subfigure}[b]{\linewidth}
    \centering
    \includegraphics[trim=0 0 0 1.1cm, clip, width=\linewidth]{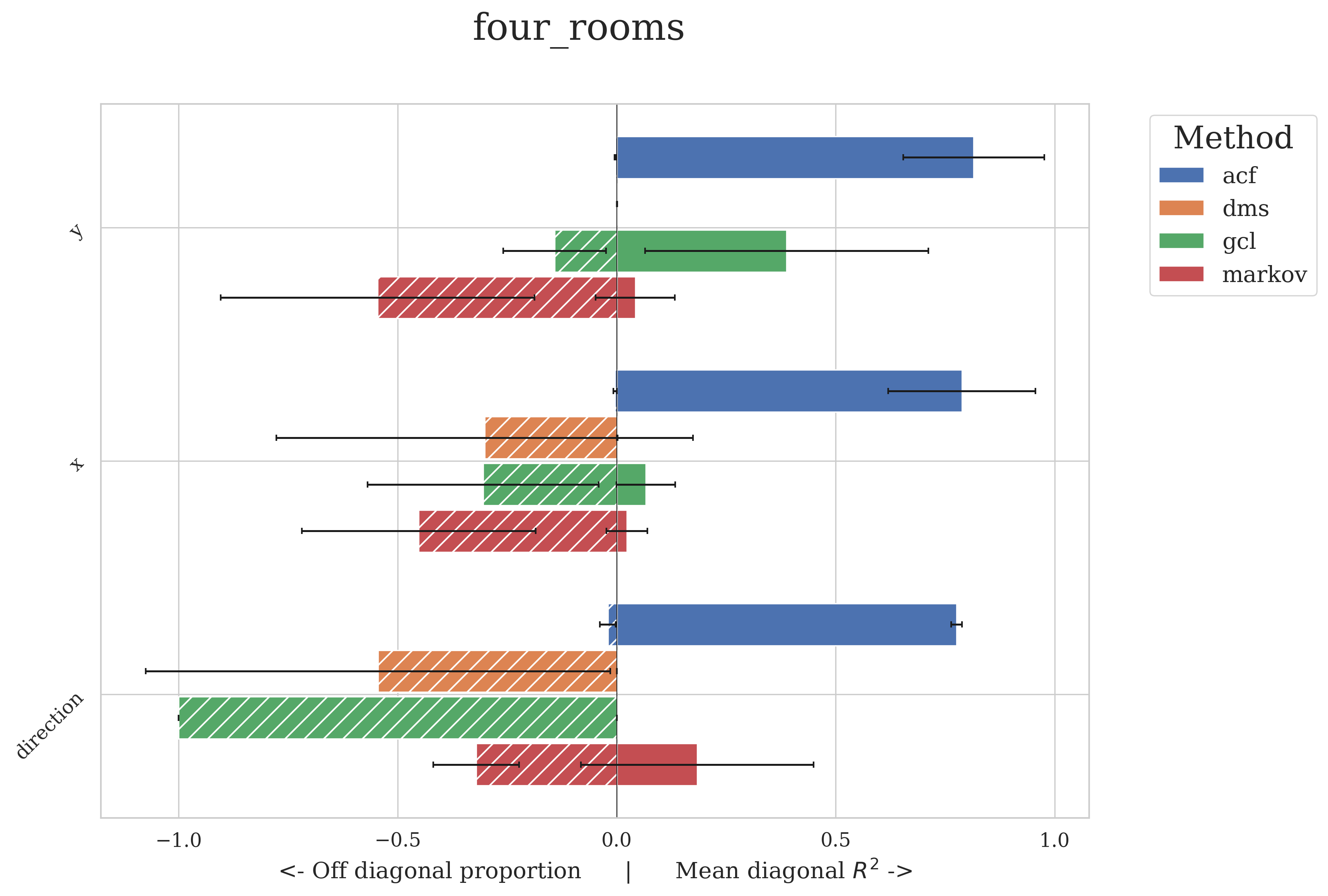}
    \caption{Off diagonal proportion vs. Mean diagonal value per state}
  \end{subfigure}
    
  \caption{\textbf{Minigrid-FourRooms} Factorization Results}\label{fig:4rooms-full}
\end{figure}

\begin{figure}[h]
  \centering
  \begin{subfigure}[b]{\linewidth}
    \centering
    \includegraphics[width=\linewidth]{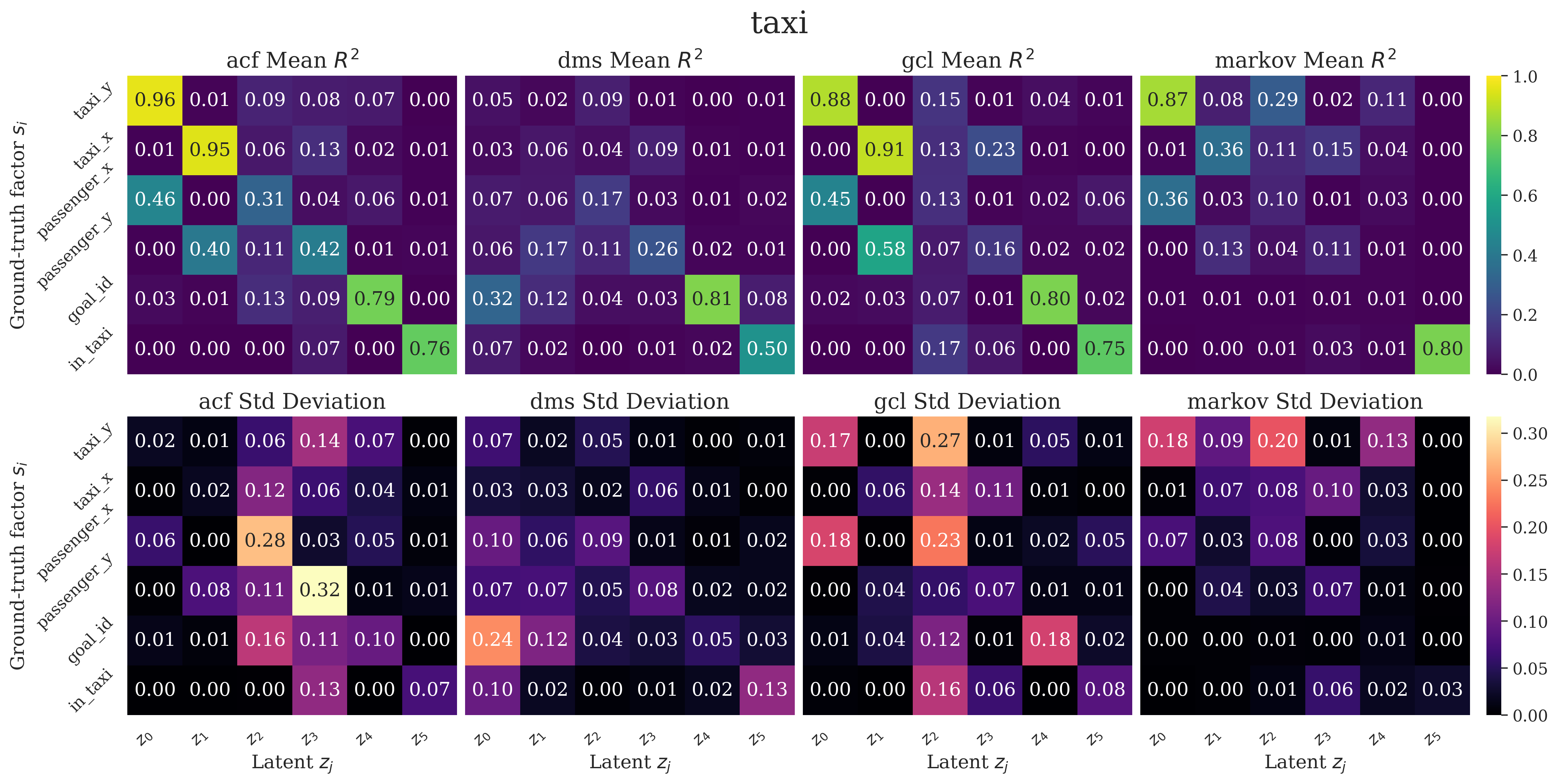}
    \caption{$R^2$ matrices for \textbf{Taxi} for $5$ seeds: [Top] Mean $R^2$ matrices. [Bottom] Standard Deviation}
    \label{fig:doorkey_results}
  \end{subfigure}
  \hfill
  \begin{subfigure}[b]{\linewidth}
    \centering
    \includegraphics[trim=0 0 0 1.1cm, clip, width=\linewidth]{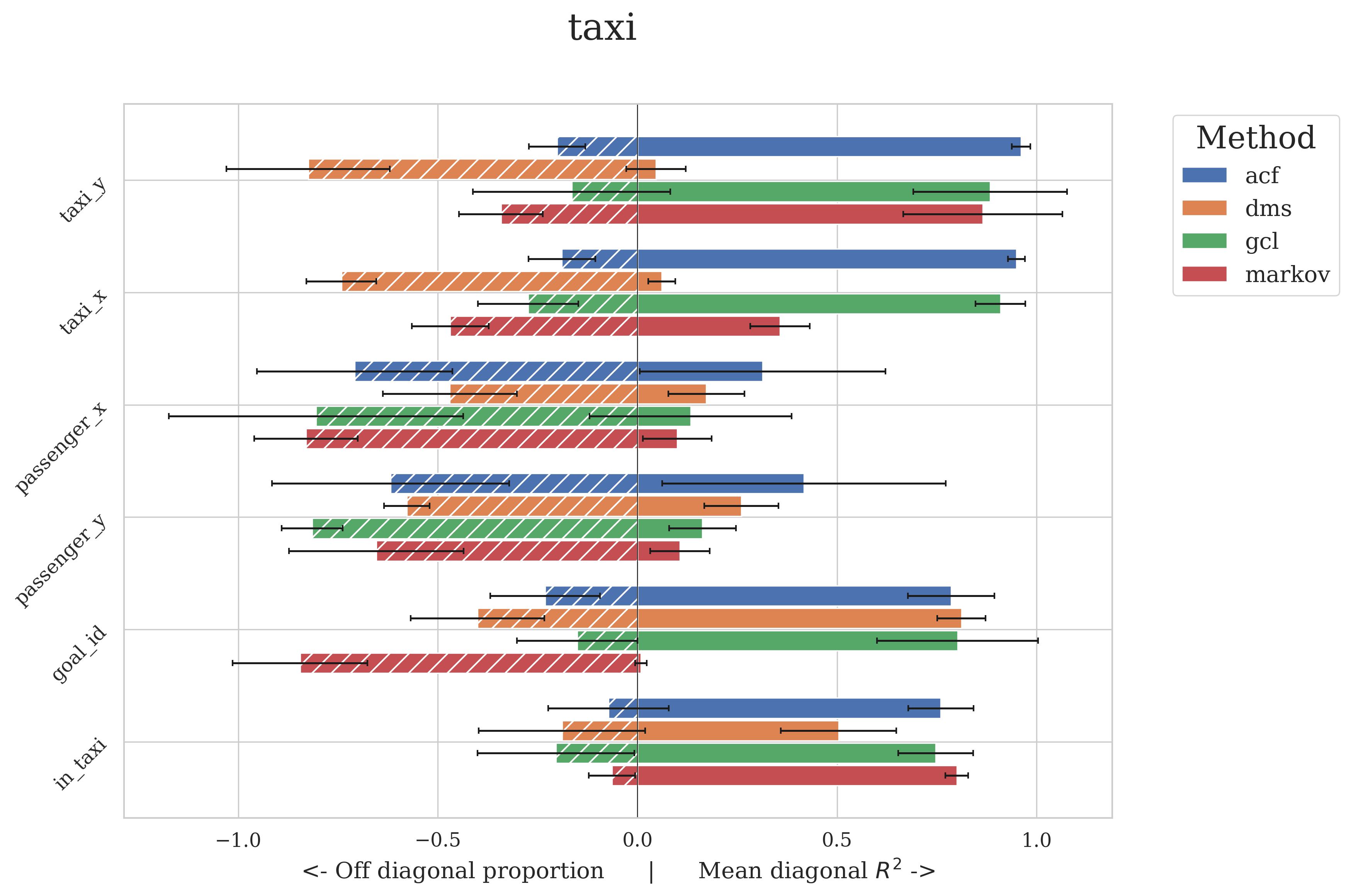}
    \caption{Off diagonal proportion vs. Mean diagonal value per state}\label{fig:taxi-full}
  \end{subfigure}

  \caption{\textbf{Taxi} Factorization Results}

\end{figure}

\begin{figure}[h]
  \centering
  \begin{subfigure}[b]{0.8\linewidth}
    \centering
    \includegraphics[width=\linewidth]{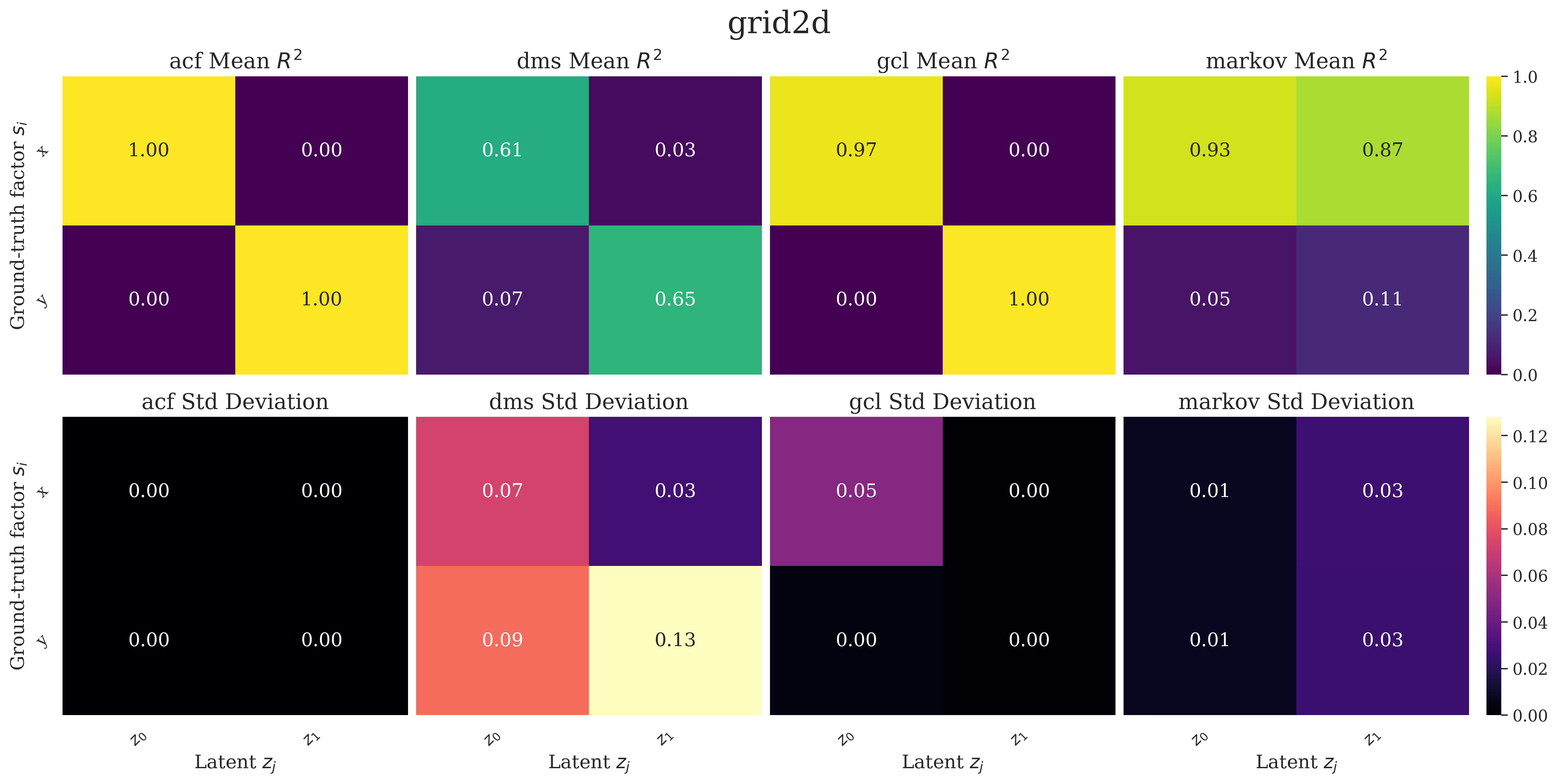}
    \caption{$R^2$ matrices for \textbf{Grid2D} for $5$ seeds: [Top] Mean $R^2$ matrices. [Bottom] Standard Deviation}

  \end{subfigure}
  \hfill
  \begin{subfigure}[b]{0.8\linewidth}
    \centering
    \includegraphics[trim=0 0 0 1.1cm, clip, width=\linewidth]{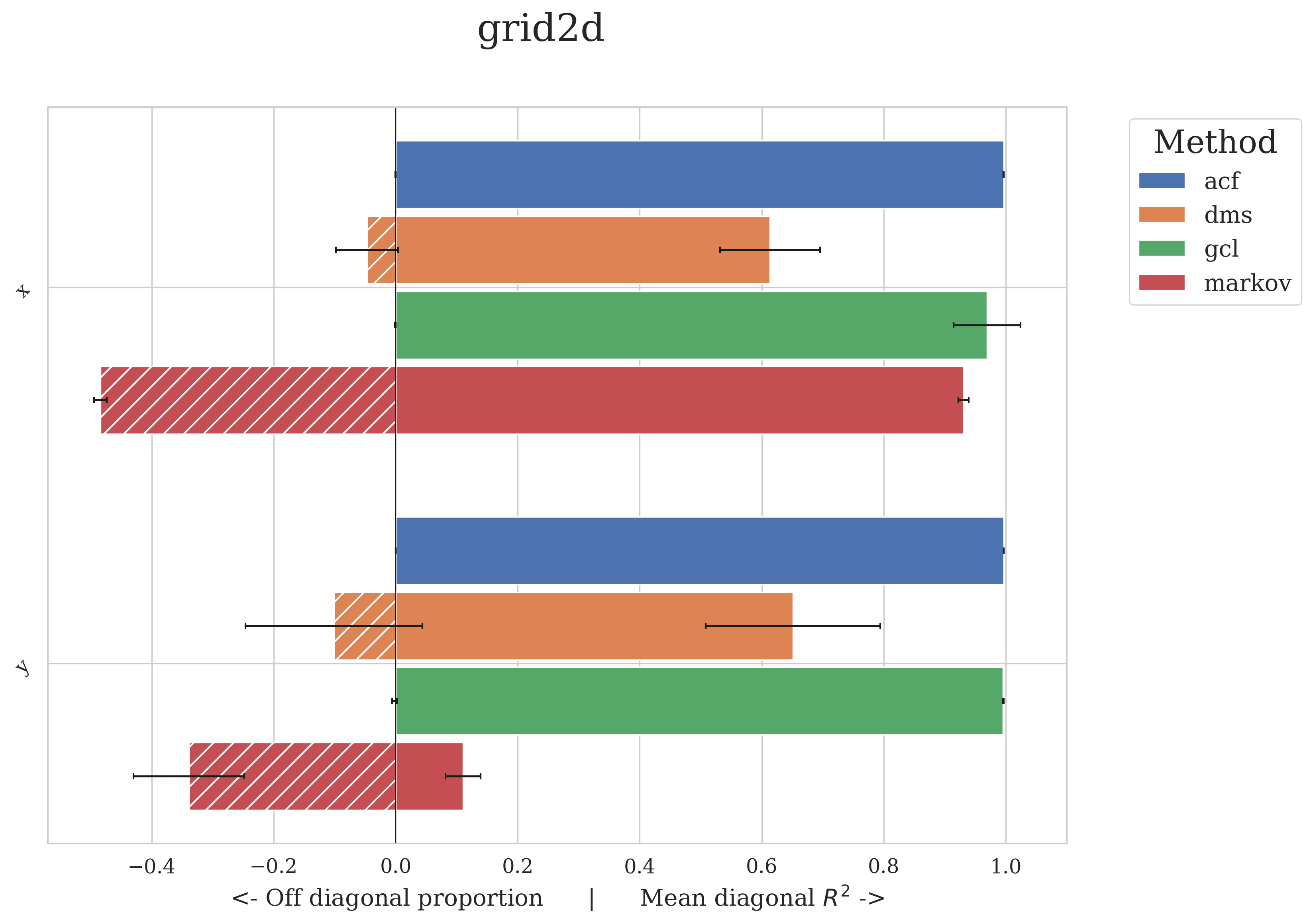}
    \caption{Off diagonal proportion vs. Mean diagonal value per state}
  \end{subfigure}

  \caption{\textbf{Grid2D} Factorization Results}\label{fig:grid2d-full}

\end{figure}

\end{document}